\documentclass[sigconf,nonacm]{acmart}
\usepackage{subcaption}
\usepackage{multirow}
\AtBeginDocument{%
  }

\acmConference[Conference acronym 'XX]{Make sure to enter the correct
  conference title from your rights confirmation email}{June 03--05,
  2018}{Woodstock, NY}
\acmISBN{978-1-4503-XXXX-X/2018/06}

\usepackage{algorithm}
\usepackage{algpseudocode}

\usepackage{gensymb}

\definecolor{poyColor}{RGB}{1, 115, 113}  

\definecolor{yyc_color}{RGB}{150, 45, 45}

\begin{document}

\title{Nipping the Butterfly Effect in the Bud: Self-Output Fine-Tuning for Autoregressive Weather Prediction}



\author{Yun-Ye\ Cai}
\affiliation{%
    \institution{National Taiwan University}
    \city{Taipei}
    \country{Taiwan}
}
\email{r12922104@csie.ntu.edu.tw}

\author{Hsuan-Tien\ Lin}
\affiliation{%
    \institution{National Taiwan University}
    \city{Taipei}
    \country{Taiwan}
}
\email{htlin@csie.ntu.edu.tw}

\begin{abstract}

Long-horizon weather forecasting is a fundamental challenge in atmospheric science, for which autoregressive Deep Learning Weather Prediction (DLWP) has emerged as the primary paradigm. Although the autoregressive pipeline is highly scalable and flexible, its prediction errors grow rapidly over long forecasting horizons. In this work, we study this error growth phenomenon from both theoretical and empirical perspectives. Our analysis reveals that the growth is driven by a feedback loop between output errors and input distribution shifts. Specifically, the autoregressive process amplifies small initial output errors, which progressively corrupt subsequent input distributions, echoing the butterfly effect in atmospheric science and ultimately deteriorating forecasting accuracy over longer horizons.
Furthermore, we show that this distributional shift originates at the earliest stage of inference, with out-of-distribution signatures detectable as early as the first autoregressive step. To mitigate this issue, we propose \textbf{Self-Output Fine-Tuning (SOFT)}, a plug-and-play strategy that leverages the model's own one-step predictions to calibrate the biased input distribution encountered at the first step. Extensive experiments demonstrate that, despite its simplicity, SOFT achieves state-of-the-art performance on long-horizon forecasting tasks and substantially reduces both prediction errors and distributional discrepancy.
The success of SOFT highlights the importance of reexamining the fundamental pipeline of deep learning weather prediction,
representing a critical pipeline advance for atmospheric science.

\end{abstract}

\begin{teaserfigure}
  \centering
  \begin{subfigure}[htb]{0.485\textwidth}
    \centering
    \includegraphics[width=\textwidth,]{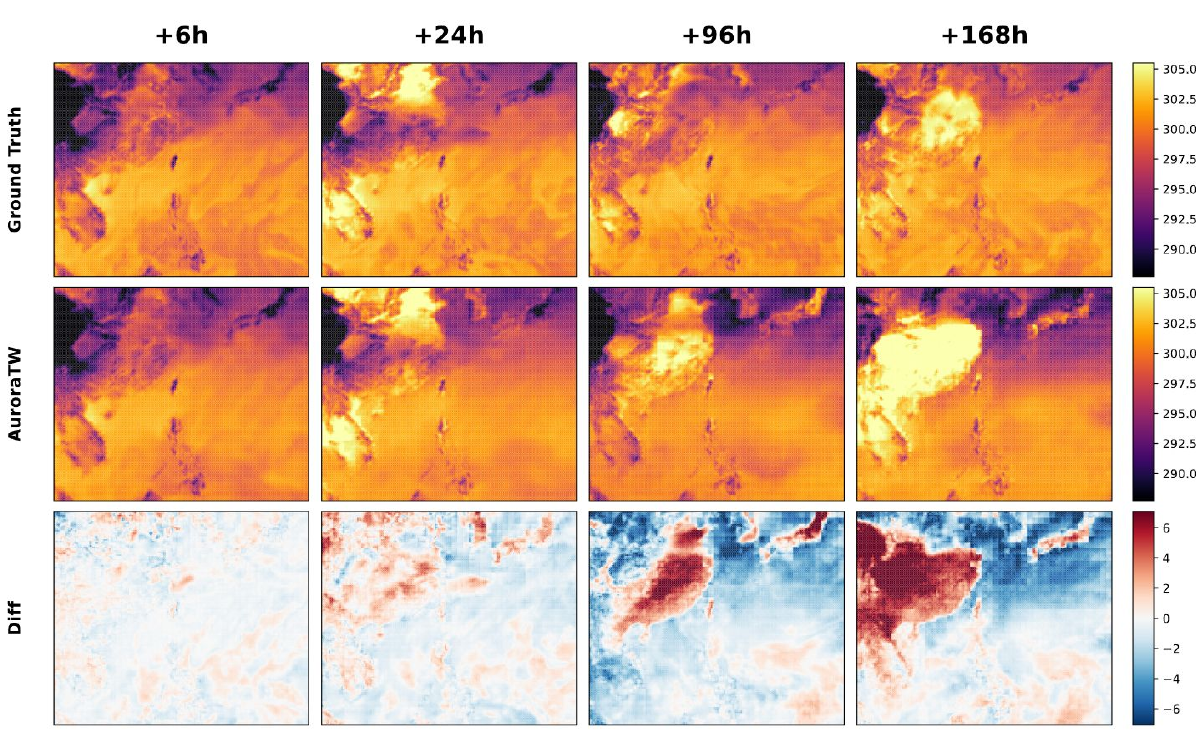}
    \caption{
      Visualization of error growth in the AuroraTW autoregressive baseline.
      As the forecasting horizon expands (from +6h to +168h), prediction errors (row 3) compound, leading to significant deviations from the ground truth.
    }
    \label{fig:preliminary_error_grow}
  \end{subfigure}
  \hfill
  \begin{subfigure}[htb]{0.485\textwidth}
    \centering
    \includegraphics[width=\textwidth,]{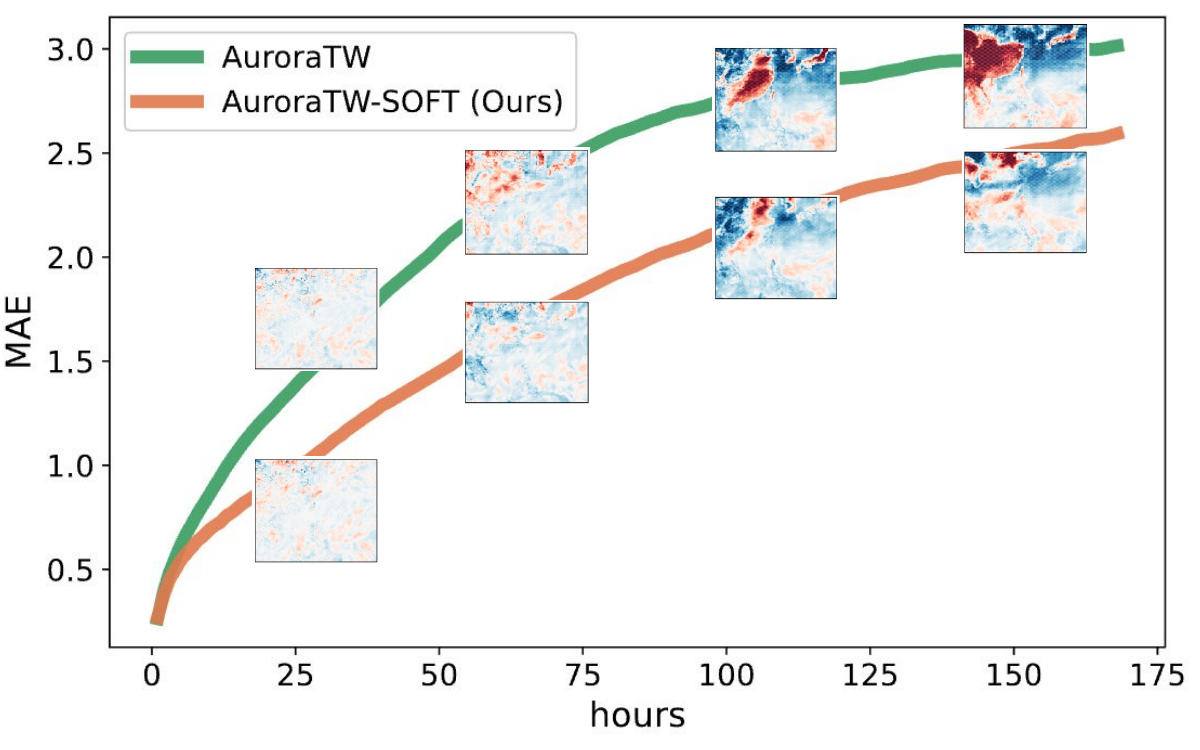}
    \caption{
      Quantitative impact of error growth on 2 meter temperature.
      While the baseline (green) degrades rapidly, our proposed SOFT method (orange) maintains a lower error trajectory over the 168 hour horizon.
    }
    \label{fig:preliminary_curve}
  \end{subfigure}
  \Description{teaser}
  \caption{
    \textbf{Visual and quantitative demonstration of autoregressive error growth and its mitigation via SOFT.}
    We (a) identify that standard autoregressive rollouts induce error growing, and (b) propose Self-Output Fine-Tuning (SOFT) to significantly reduce long-term error.
  }
  \label{fig:overall_figure}
\end{teaserfigure}

\keywords{Deep Learning Weather Prediction,Autoregressive Modeling,Long-term Forecasting,Out-of-Distribution Generalization,AI for Science}

\settopmatter{printacmref=false}
\setcopyright{none}
\renewcommand\footnotetextcopyrightpermission[1]{}
\pagestyle{plain}

\maketitle


\section{Introduction}

High-quality meteorological modeling and forecasting play a crucial role in scientific research and societal planning \cite{brunet2023advancing, coughlan2015forecast, meehl2000introduction}. In recent years, deep learning weather prediction (DLWP) models \cite{bi2023accurate, bodnar2025foundation, lam2023graphcast, pathak2022fourcastnet, nguyen2023climax, nguyen2024scaling} achieve competitive performance compared to traditional numerical weather prediction (NWP) methods while offering substantially lower computational cost. Fourcastnet \cite{pathak2022fourcastnet}, Pangu \cite{bi2023accurate}, GraphCast \cite{lam2023graphcast}, Fuxi \cite{chen2023fuxi}, FengWu \cite{chen2023fengwu}, Stormer \cite{nguyen2024scaling} and Baguan \cite{niu2025utilizing} demonstrate the superior accuracy of deep learning models in medium-range forecasting compared to traditional operational systems. In addition, ClimaX \cite{nguyen2023climax}, WeatherGFM \cite{zhao2024weathergfmlearningweathergeneralist}, and Aurora\cite{bodnar2025foundation} have been designed as foundation models, showing broad applicability across various downstream weather tasks.

State-of-the-art DLWP systems largely rely on autoregressive forecasting strategies, which are favored because they allow flexible rollout durations during inference. Autoregressive pipelines perform well in nowcasting ($\le$ 6 hours) and short-range forecasting (1 $\sim$ 3 days); however, prediction errors grow dramatically over medium-range (3 $\sim$ 10 days) and long-range ( $ > $ 10 days) horizons, as illustrated in Figure~\ref{fig:overall_figure}. This phenomenon, known as error accumulation \cite{nguyen2023climax, lam2023graphcast} or cumulative iteration error \cite{bi2023accurate}, poses a critical challenge. Since accurate longer-horizon forecasts are essential for understanding large-scale atmospheric dynamics and supporting climate-related decision-making, mitigating this degradation is crucial for the practical deployment of DLWP systems.

Although various empirical strategies, such as multi-step rollout tuning~\cite{lam2023graphcast, bodnar2025foundation}, temporal hierarchical modeling~\cite{bi2023accurate, chen2023fuxi}, and replay buffers~\cite{bodnar2025foundation}, have been proposed to mitigate this growing error phenomenon,
they generally incur much higher computational costs or rougher temporal resolution.
Moreover, a clear understanding of the root cause of the phenomenon remains limited.
In this work, we argue that error growth in autoregressive strategies fundamentally reflects a \textbf{distributional robustness problem}. Specifically, we hypothesize that this error growth is mainly driven by a feedback loop between output errors and input distribution shifts. The loop originates from the autoregressive pipeline, where output prediction errors propagate as biases in the inputs for subsequent steps. As these biases accumulate, the input sequence progressively drifts away from the original training data distribution. This drift, in turn, further results in higher output errors, creating a vicious cycle in which the model is repeatedly forced to predict on data that increasingly resemble out-of-distribution (OOD) samples.

We substantiate this hypothesis through extensive theoretical and empirical analysis. First, we derive an error propagation formula showing that error growth is determined by the interaction between errors newly generated at the current step and those amplified from previous imperfect predictions. This formulation reveals that even small initial errors inevitably expand into large-scale deviations, akin to the butterfly effect in traditional atmospheric science. Empirically, we find that, the growth of prediction errors is accompanied by a drift in input distributions within the autoregressive pipeline. Much more surprisingly, however, this drift, albeit  visually subtle, is detectable already at the very first step. Together, these findings indicate that our predictions are affected by the vicious feedback cycle from the outset, causing errors to compound throughout the sequence.

Motivated by these insights, we propose Self-Output Fine-Tuning (SOFT), a simple yet effective, plug-and-play training strategy. Our key idea is to directly address the OOD issue by exposing the model to OOD scenarios during training, thereby improving distributional robustness. In particular, SOFT fine-tunes the model on its own one-step predictions, calibrating the biased input distribution and familiarizing the model with inevitable prediction errors, effectively converting OOD scenarios into pseudo in-distribution data. As a model-agnostic tactic, SOFT integrates seamlessly into existing DLWP pipelines, mitigating error growth in long-term forecasting from the very first step. Extensive experiments demonstrate that SOFT achieves state-of-the-art performance on long-horizon forecasting tasks and substantially reduces both prediction errors and distributional discrepancy.
Our contributions can be summarized as follows:

\begin{itemize}
    \item We identify that error growth in autoregressive DLWP models arises from a vicious loop between output errors and shifts in the input distribution.
    
    \item We analyze and reveal that small errors quickly grow, producing out-of-distribution states almost immediately in the autoregressive pipeline.
    \item We introduce \textbf{SOFT}, a plug-and-play strategy that improves long-horizon forecasting by simply fine-tuning the model on its own one-step predicted distributions.
    \item We conduct experiments that demonstrate the potential of SOFT, with open-source release%
    \footnote{\url{https://github.com/yunye0121/SOFT}} to increase reproducibility and impact. In particular, SOFT significantly reduces both prediction errors and distributional discrepancy across long horizons, achieving state-of-the-art performance. The potential underscores the importance of pipeline-level advances in artificial intelligence for science.
\end{itemize}

\noindent This work has been made possible through close engagement within the joint, atmospheric-science-driven research project led by Dr. Hung-Chi Kuo (Department of Atmospheric Sciences) and Dr. Buo-Fu Chen (Center for Weather and Climate Disaster Research) at National Taiwan University. We are grateful for our collaborator's deep domain expertise and strong embrace of artificial intelligence for science, which has been pivotal in identifying this critical challenge in atmospheric science. They fully recognize the potential impact of our proposed solution and are actively engaged in integrating it directly into the operational forecasting pipeline.

\section{Related Work}

\subsection{Deep Learning Weather Prediction and Long-Horizon Training Strategy}

Deep learning methods address the high computational costs and limited efficiency of traditional numerical weather prediction \cite{bauer2015quiet, brotzge2023challenges}. 
Transformer-based architectures dominate the field, including FourCastNet \cite{pathak2022fourcastnet}, Pangu \cite{bi2023accurate}, Aurora \cite{bodnar2025foundation}, Stormer \cite{nguyen2024scaling}, ClimaX \cite{nguyen2023climax}, WeatherGFM \cite{zhao2024weathergfmlearningweathergeneralist}, Fuxi \cite{chen2023fuxi}, and FengWu \cite{chen2023fengwu}. 
Alternatively, GraphCast \cite{lam2023learning} utilizes Graph Neural Networks (GNNs). 

Despite their short-term accuracy, these models suffer from rapid error growth during long-horizon autoregressive inference.

Previous work identifies several primary mitigation strategies.
MultiStep rollout tuning \cite{lam2023graphcast} minimizes error by training on a sequence of predictions rather than a single step.
However, unrolling autoregressive models significantly increases computational costs.
Replay buffers \cite{chen2023fengwu, bodnar2025foundation} attempt to mitigate this overhead by storing training states to simulate sequential goals,
borrowing concepts from deep reinforcement learning.
Alternatively, temporal hierarchical \cite{bi2023accurate} approaches train on larger time steps to reduce the number of required rollouts. 
Both rollout and replay buffer methods try to address error growth via sequential optimization, leading to extra training costs.
Conversely, while temporal hierarchical methods maintain lower deterministic error, they reduce temporal resolution, limiting their applicability for fine-grained meteorological tasks.

\subsection{Error Growth and Distributional Shift}
Error growth, also known as error accumulation, is a fundamental challenge in autoregressive sequence generation, where small early deviations compound to drive the system into unstable states \cite{ranzato2015sequence}.
This phenomenon, formally described as exposure bias \cite{bengio2015scheduled} or covariate shift \cite{arora-etal-2022-exposure}, arises from the discrepancy between training and inference: models are trained conditioned on ground truth but must infer based on their own potentially imperfect predictions.

Common solutions include imitation learning and curriculum strategies.
Scheduled Sampling (SS) \cite{bengio2015scheduled} stochastically mixes ground truth and model predictions in the sequence generated during training.
Dataset Aggregation (Dagger) \cite{ross2011reduction} offers a more rigorous solution by iteratively collecting data from the model's induced distribution and increasing the dataset content after querying an oracle to label these states.
Although theoretically sound and empirically work, these methods pose some significant computational challenges.
Scheduled sampling requires unrolling sequential graphs and DAgger necessitates iterative retraining loops.
While this works for natural language processing as token calculating is cheap, these methods may not work in DLWP as weather variable map is a high-dimensional complex data.

\section{Analysis of Error Growth}
\label{sec:analysisforerrorgrowth}

In this section, we investigate the underlying mechanisms of error growth in autoregressive deep learning weather prediction.
We first establish the problem notation, and then analyze the phenomenon from both theoretical and empirical perspectives.
Our analysis identifies the feedback loop between input distribution shifts and output errors as the dominant factor causing instability in long-horizon predictions.

\subsection{Problem Setup}
\label{sec:problemsetup}

\subsubsection{Weather Forecasting as Regression}
We define the weather state at time $t$ as a tensor $\mathbf{X}_t \in \mathbb{R}^{H \times W \times V}$,
where $H$ and $W$ denote the spatial resolution (latitude and longitude grid) and $V$ represents the number of meteorological variables (e.g., temperature, wind speed, geopotential).

\subsubsection{Autoregressive Rollout}
For long-horizon forecasting, we aim to predict the sequential weather state up to a total horizon $\Delta t$.
Let $f$ be the forecasting model and $\delta t$ be the time interval between inference steps.
The total number of autoregressive steps is $K = \Delta t / \delta t$.
In an autoregressive pipeline, the forecasting model operates recursively, feeding the prediction at step $k$ back as the input for step $k+1$.
The prediction trajectory is generated as:
\begin{equation}
    \hat{\mathbf{X}}_{t+k\delta t} = f_\theta^{k}(\mathbf{X}_{t}), \quad \text{for } k=1, \dots, K
\end{equation}
with a given initial condition $\mathbf{X}_t$.

\subsubsection{Objective}
Our goal is to develop a framework that minimizes the prediction error over the entire horizon.
Formally, we seek to minimize the expected loss over the rollout steps:
\begin{equation}
    \min_\theta \mathop{\mathbb{E}}_{\mathbf{X}_t} \left[ \sum_{k=1}^{K} \mathcal{L}(f_\theta^{k}(\mathbf{X}_{t}), \mathbf{X}_{t+k\delta t}) \right]
\end{equation}
where $\mathcal{L}$ denotes a loss function (e.g., MSE or MAE).

In this work, we are going to answer following researcher questions:
RQ1: What is the root cause of the error growth autogressive DLWP?
RQ2: How to solve it efficiently without brute-forcefully computing increasing?

\subsection{Error Propagation in Autoregressive DLWP}
\label{sec:error_propagation}
\subsubsection{Theoretical Viewpoint}
\label{sec:err_propagation_theoretical}
We first analyze how prediction errors propagate across autoregressive steps,
leading to error growth that deteriorates long-horizon weather forecasting.
While autoregressive models are theoretically sound,
practical implementation inevitably introduces small errors at each forward pass due to data noise \cite{kendall2017uncertainties}, limited model capacity \cite{Goodfellow-et-al-2016}, or architectural bias \cite{teney2024neural}.
We decompose the error at each step into (1) new errors introduced in the current step and (2) errors propagated from previous steps.
For following analysis, we treat the weather state as a flattened vector $\mathbf{x} = \text{vec}(\mathbf{X}) \in \mathbb{R}^d$, allowing us to utilize standard Jacobian notation.

\begin{proposition}[Error Propagation Approximation]
    \label{prop:error_prop}
    Let the ground truth be denoted by $\mathbf{x}_t = g(\mathbf{x}_{t-1})$ and the states from learned model by $\hat{\mathbf{x}}_t = f_\theta(\hat{\mathbf{x}}_{t-1})$.
    We define the \textbf{newly generated step error} at step $t$ as $\boldsymbol{\epsilon}_t \triangleq f_\theta(\mathbf{x}_{t-1}) - \mathbf{x}_t$,
    and the \textbf{error} at step $t$ as $\mathbf{e}_t \triangleq \hat{\mathbf{x}}_t - \mathbf{x}_t$.
    
    Assume the ground truth $g$ and the model error function $\epsilon(\mathbf{x}) \triangleq f_\theta(\mathbf{x}) - g(\mathbf{x})$ are differentiable with Jacobians $\mathbf{J}_g$ and $\mathbf{J}_\epsilon$, respectively.
    If $\mathbf{e}_{t-1}$ is sufficiently small, $\mathbf{e}_{t}$ obeys the recurrence relation:
    \begin{equation} \label{eq:error_recurrence_t}
        \mathbf{e}_t \approx (\mathbf{J}_g(\mathbf{x}_{t-1}) + \mathbf{J}_\epsilon(\mathbf{x}_{t-1})) \mathbf{e}_{t-1} + \boldsymbol{\epsilon}_{t}
    \end{equation}
\end{proposition}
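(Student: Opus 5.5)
The plan is a first-order Taylor expansion of the model map around the true state $\mathbf{x}_{t-1}$. Everything else is algebraic bookkeeping with the definitions in Proposition~\ref{prop:error_prop}.

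First, I rewrite the model as $f_\theta = g + \epsilon$, which is exactly the definition of the model error function $\epsilon(\mathbf{x}) = f_\theta(\mathbf{x}) - g(\mathbf{x})$. Since $f_\theta$ is a sum of two differentiable maps, it is differentiable with $\mathbf{J}_{f_\theta} = \mathbf{J}_g + \mathbf{J}_\epsilon$. Next, I substitute $\hat{\mathbf{x}}_{t-1} = \mathbf{x}_{t-1} + \mathbf{e}_{t-1}$ into the error:
\[
\mathbf{e}_t = \hat{\mathbf{x}}_t - \mathbf{x}_t = f_\theta(\mathbf{x}_{t-1} + \mathbf{e}_{t-1}) - \mathbf{x}_t .
\]

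I then add and subtract $f_\theta(\mathbf{x}_{t-1})$, which splits the error into a propagated part and a new part:
\[
\mathbf{e}_t = \bigl[f_\theta(\mathbf{x}_{t-1} + \mathbf{e}_{t-1}) - f_\theta(\mathbf{x}_{t-1})\bigr] + \bigl[f_\theta(\mathbf{x}_{t-1}) - \mathbf{x}_t\bigr].
\]
By definition, the second bracket is $\boldsymbol{\epsilon}_t$. For the first bracket, differentiability of $f_\theta$ at $\mathbf{x}_{t-1}$ gives
\[
f_\theta(\mathbf{x}_{t-1} + \mathbf{e}_{t-1}) - f_\theta(\mathbf{x}_{t-1}) = \bigl(\mathbf{J}_g(\mathbf{x}_{t-1}) + \mathbf{J}_\epsilon(\mathbf{x}_{t-1})\bigr)\mathbf{e}_{t-1} + o(\|\mathbf{e}_{t-1}\|).
\]
Combining the two pieces yields
\[
\mathbf{e}_t = \bigl(\mathbf{J}_g(\mathbf{x}_{t-1}) + \mathbf{J}_\epsilon(\mathbf{x}_{t-1})\bigr)\mathbf{e}_{t-1} + \boldsymbol{\epsilon}_t + o(\|\mathbf{e}_{t-1}\|).
\]
Dropping the remainder under the hypothesis that $\mathbf{e}_{t-1}$ is small gives \eqref{eq:error_recurrence_t}.

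The only delicate point is what ``$\approx$'' means. Plain differentiability gives only an $o(\|\mathbf{e}_{t-1}\|)$ remainder. If the Jacobians are Lipschitz, the remainder can be sharpened to $O(\|\mathbf{e}_{t-1}\|^2)$ via the integral form of the mean value theorem. I would state the result in this precise form.

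Note that the new error $\boldsymbol{\epsilon}_t$ is exact, with no approximation. Only the propagated term is linearized.

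As a remark, unrolling the recurrence gives
\[
\mathbf{e}_t \approx \sum_{s \le t} \Bigl(\prod_{r=s}^{t-1} \mathbf{J}_{f_\theta}(\mathbf{x}_r)\Bigr)\boldsymbol{\epsilon}_s ,
\]
where the product is ordered with later indices on the left. This exhibits the butterfly-effect amplification discussed in the text.
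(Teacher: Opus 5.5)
Your proposal is correct and follows the paper's proof essentially step for step: substitute $\hat{\mathbf{x}}_{t-1}=\mathbf{x}_{t-1}+\mathbf{e}_{t-1}$, linearize $f_\theta$ at the true state $\mathbf{x}_{t-1}$ so that $f_\theta(\mathbf{x}_{t-1})-\mathbf{x}_t$ appears exactly as $\boldsymbol{\epsilon}_t$, and split $\mathbf{J}_{f_\theta}=\mathbf{J}_g+\mathbf{J}_\epsilon$ by linearity of differentiation (you do this split before the expansion, the paper does it after). Your explicit $o(\|\mathbf{e}_{t-1}\|)$ remainder, sharpened to $O(\|\mathbf{e}_{t-1}\|^2)$ under Lipschitz Jacobians, makes the paper's informal ``$\approx$'' precise, and your unrolled remark agrees with the paper's $n$-step corollary once you take $\mathbf{e}_0=\mathbf{0}$, so that the sum starts at $s=1$.
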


This approximation indicates that $\mathbf{e}_t$ consists of (1) the new single-step error $\boldsymbol{\epsilon}_t$ and (2) a propagated term.
The propagated term transforms the previous error $\mathbf{e}_{t-1}$ through the combined dynamics of the system and the model error Jacobian.
This highlights that error growth is driven not only by the current prediction quality but by the amplification of past errors.
We provide the full derivation in Appendix \ref{sec:app_for_err_acc}.

This approximation can be extended recursively, explaining why errors compound rapidly as the forecast horizon increases.

\begin{corollary}[Error for $n$-steps]
    \label{cor:ea_nstep}
    Using the recurrence from Proposition \ref{prop:error_prop}, the error at step $n$ is:
    \begin{equation}
        \mathbf{e}_n \approx \sum_{i=1}^{n} \underbrace{\left( \prod_{j=i}^{n-1} (\mathbf{J}_g(\mathbf{x}_{j}) + \mathbf{J}_\epsilon(\mathbf{x}_{j})) \right)}_{\mathbf{\Psi}_{n,i}} \boldsymbol{\epsilon}_{i}
    \end{equation}
    where $\mathbf{\Psi}_{n,i}$ is the product of the sum terms from step $i$ to $n$, and the empty product (when $i=n$) is the identity matrix $\mathbf{I}$. 
\end{corollary}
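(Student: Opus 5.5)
We unroll the recurrence of Proposition~\ref{prop:error_prop} by induction on $n$. Write $\mathbf{A}_j \triangleq \mathbf{J}_g(\mathbf{x}_{j}) + \mathbf{J}_\epsilon(\mathbf{x}_{j})$, so the recurrence reads $\mathbf{e}_t \approx \mathbf{A}_{t-1}\mathbf{e}_{t-1} + \boldsymbol{\epsilon}_t$. We also fix conventions before starting.

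The ordered product $\prod_{j=i}^{n-1}\mathbf{A}_j$ is taken with later indices on the left, i.e.\ $\mathbf{A}_{n-1}\mathbf{A}_{n-2}\cdots\mathbf{A}_i$. This ordering matters because the Jacobians need not commute. The empty product for $i=n$ is $\mathbf{I}$.

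The rollout starts from the exact initial condition $\hat{\mathbf{x}}_0 = \mathbf{x}_0$, so $\mathbf{e}_0 = \mathbf{0}$. Then $\hat{\mathbf{x}}_1 = f_\theta(\mathbf{x}_0)$, which gives $\mathbf{e}_1 = \boldsymbol{\epsilon}_1$ exactly.

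\emph{Base case.} For $n=1$ the sum has the single term $\mathbf{\Psi}_{1,1}\boldsymbol{\epsilon}_1 = \mathbf{I}\boldsymbol{\epsilon}_1$, which matches $\mathbf{e}_1 = \boldsymbol{\epsilon}_1$.

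\emph{Inductive step.} Assume $\mathbf{e}_n \approx \sum_{i=1}^{n}\mathbf{\Psi}_{n,i}\boldsymbol{\epsilon}_i$ and apply the recurrence at step $n+1$:
\begin{equation*}
\mathbf{e}_{n+1} \approx \mathbf{A}_n \sum_{i=1}^{n}\mathbf{\Psi}_{n,i}\boldsymbol{\epsilon}_i + \boldsymbol{\epsilon}_{n+1}.
\end{equation*}
Multiplying on the left gives $\mathbf{A}_n\mathbf{\Psi}_{n,i} = \mathbf{\Psi}_{n+1,i}$. The extra term is $\boldsymbol{\epsilon}_{n+1} = \mathbf{\Psi}_{n+1,n+1}\boldsymbol{\epsilon}_{n+1}$. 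Together these give the claimed formula at $n+1$.

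\emph{Main obstacle.} The hard part is justifying that the approximations compose. Proposition~\ref{prop:error_prop} drops $o(\|\mathbf{e}_{t-1}\|)$ remainders at each step, and these remainders then get multiplied by the subsequent $\mathbf{A}_j$.

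I would make this precise by carrying an explicit remainder: $\mathbf{e}_t = \mathbf{A}_{t-1}\mathbf{e}_{t-1} + \boldsymbol{\epsilon}_t + \mathbf{r}_t$ with $\mathbf{r}_t = o(\|\mathbf{e}_{t-1}\|)$. The same induction then yields an exact identity with an additional term $\sum_i \mathbf{\Psi}_{n,i}\mathbf{r}_i$.

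This term is of higher order provided two conditions hold. First, all $\mathbf{e}_j$ for $j<n$ stay in the small-error regime, for instance when the step errors $\boldsymbol{\epsilon}_i$ are small. Second, the $\|\mathbf{A}_j\|$ are bounded over a fixed finite horizon $n$.

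I would state the corollary as a first-order (linearized) statement under these assumptions. I would also note that the approximation degrades as $n$ grows, precisely because $\mathbf{\Psi}_{n,i}$ may amplify the remainders as well.
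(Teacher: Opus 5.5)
Your proposal is correct and matches the paper's approach. The paper gives no separate proof, only the remark that the recurrence of Proposition~\ref{prop:error_prop} ``can be extended recursively''; your induction, using $\mathbf{e}_0=\mathbf{0}$ from the ground-truth initialization and the ordered product $\mathbf{A}_{n-1}\cdots\mathbf{A}_i$, is that unrolling made explicit, and your remainder term $\sum_i \mathbf{\Psi}_{n,i}\mathbf{r}_i$ correctly adds the first-order (small-error) caveat that the paper leaves implicit.
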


In chaotic systems such as weather, the propagator norm typically satisfies $\|\mathbf{\Psi}_{n,k}\| > 1$.
This implies that past single-step errors $\boldsymbol{\epsilon}_k$ are amplified,
causing the cumulative error to grow rapidly along the generated trajectory.
This mathematical growth implies a critical physical consequence: even small initial errors become significant after multiple autoregressive steps, echoing the \textbf{butterfly effect} often observed in weather systems.

\subsubsection{Empirical Study}
\label{sec:empirical}

Based on the derivation in Section \ref{sec:err_propagation_theoretical}, we posit that there is an inherent risk of error propagation in autoregressive pipelines.
However, the observed error at each step contains both the propagated error described above and the intrinsic difficulty of forecasting that specific time step.
To isolate the impact of propagated error from the intrinsic difficulty,
we conduct a controlled experiment.

Instead of applying autoregressive pipeline, we adopt a \emph{direct prediction} strategy: we set the inference time interval $\delta t$ equal to the horizon $\Delta t$, making the model predict the long term result in a single inference pass.
This strategy eliminates error propagation while preserving other factors.

\begin{figure}[ht]
    \centering
    \includegraphics[width=0.975\linewidth]{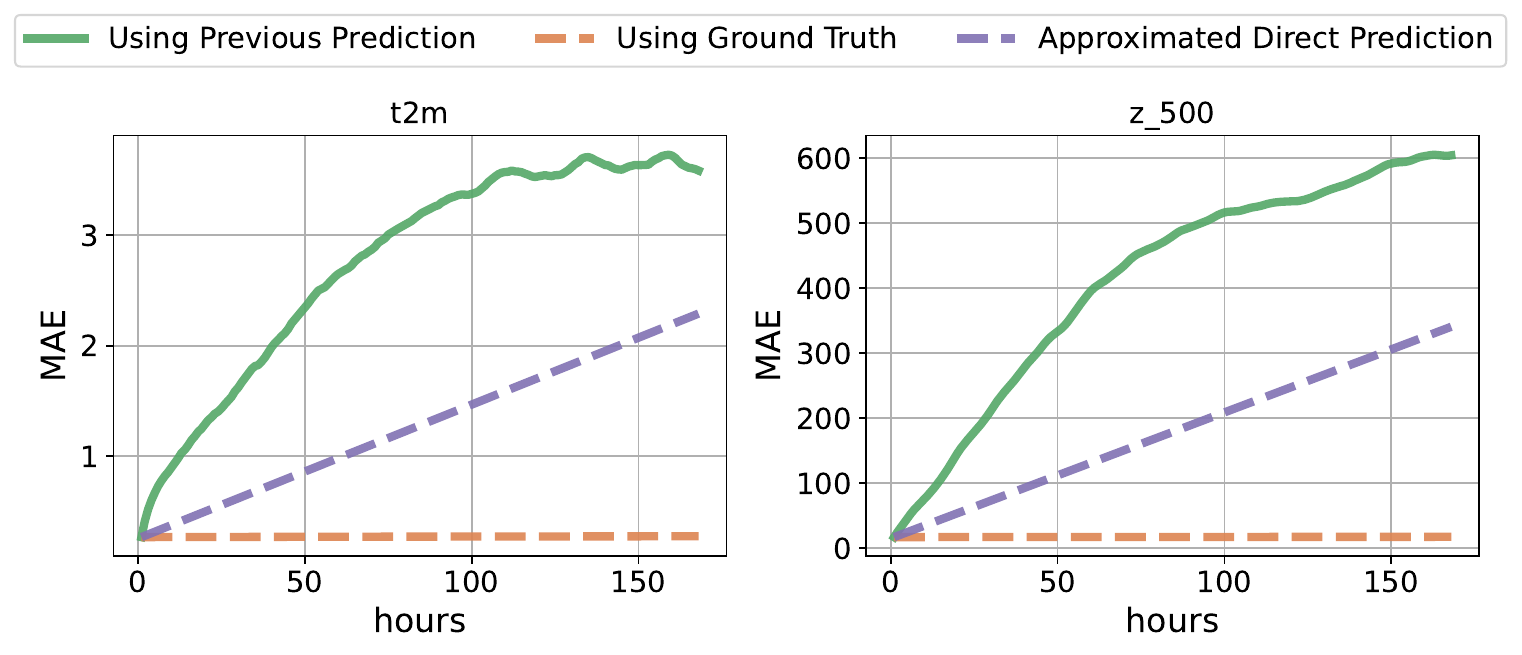}
    \caption{
        \textbf{Impact of error propagation.} 
        The standard autoregressive model (green) accumulates error significantly faster than the direct prediction strategy (purple)
        which isolates the intrinsic forecasting difficulty. 
        The gap between them confirms that error propagation is a primary driver of performance degradation.
    }
    \Description{Two subfigures showing error trend plots for variables t2m and z\_500.}
    \label{fig:emperical_approx}
\end{figure}

Figure~\ref{fig:emperical_approx} presents the results on two representative variables: t2m and z\_500.
Oracle baseline (orange) indicates that using ground truth inputs as the upper bound of performance.
The \emph{Approximated Direct Prediction} (purple) yields a significantly lower error growth rate compared to the standard autoregressive baseline (green).
The performance gap confirms that propagated errors act as a compounding negative factor,
exacerbating the error beyond the intrinsic difficulty of the prediction task itself.

\begin{figure}[htb]
    \centering
    \includegraphics[width = \linewidth]{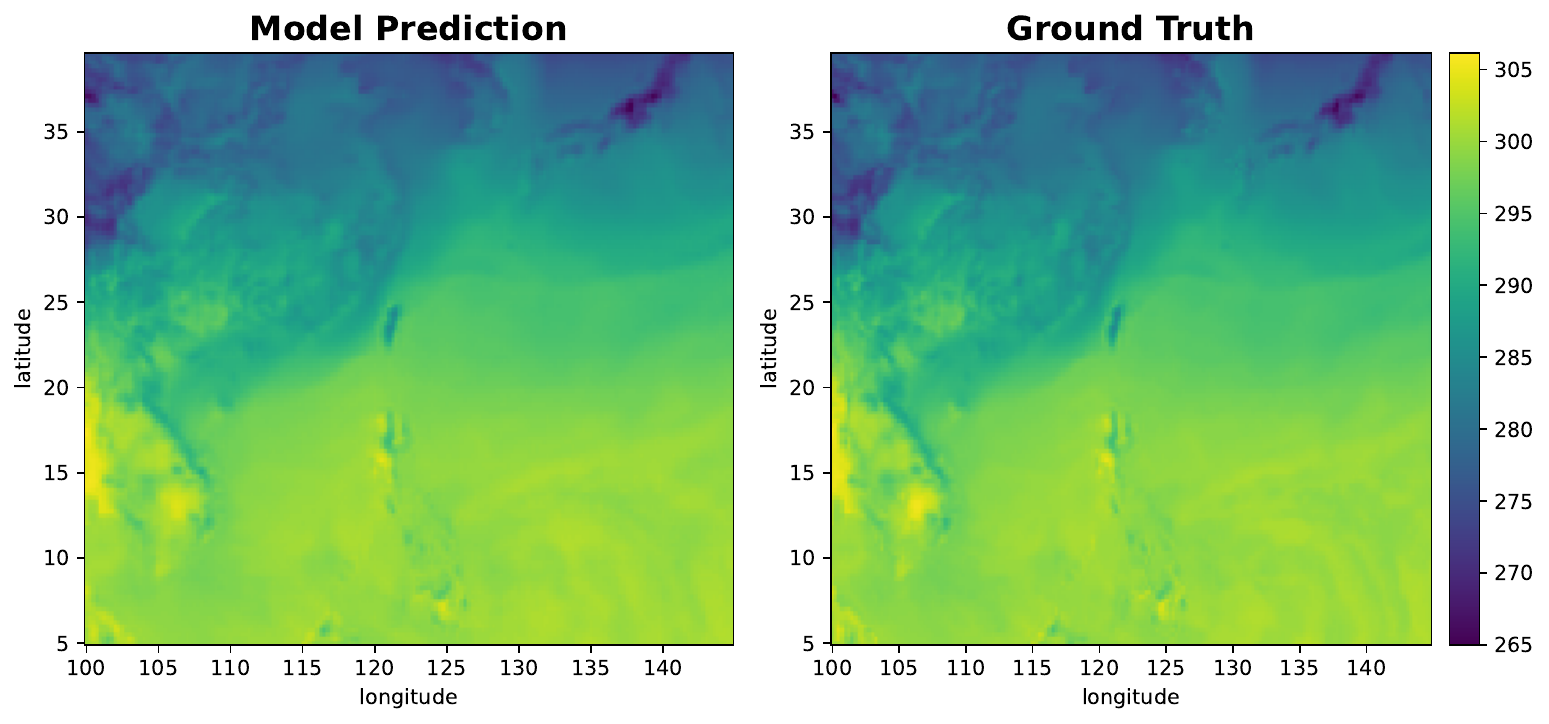}
    \caption{
        Visualization of ground-truth ERA5 (left) and our AuroraTW model prediction (right) at 2023/01/10 08:00. While visually similar to the human eye, the discriminator identifies them as belonging to different distributions.
    }
    \Description{Ground-truth ERA5 and AuroraTW model prediction visualizations at the same timestamp.}
    \label{fig:Truth_and_Generated}
\end{figure}

\subsection{Out-of-Distribution Butterfly effect in DLWP}
\label{sec:OODbegin}
Building on the discussion above, we realized that the autoregressive pipeline brings its own restrictions to weather forecasting tasks.
The key finding in Figure~\ref{fig:emperical_approx} illustrates that using a model's generated prediction as its subsequent input significantly degrades the result.
This causes worse error performance than the inherent difficulty of the task itself, which we empirically showed by having the model directly predict a specific lead time).
This motivates us to study how the model's generated output actually differs from the ground truth and how it affects subsequent predictions.
Figure~\ref{fig:Truth_and_Generated} compares the one-hour single-step prediction alongside the ground truth states at the corresponding time.
Surprisingly, to the expert eye, the predicted fields appear physically consistent and visually indistinguishable from reality.
However, visual fidelity can be misleading, masking high-dimensional discrepancies or artifacts imperceptible to humans \cite{teney2024neural, ulyanov2018deep}.
Deep neural networks, on the other hand, can detect these subtle artifacts inherent to predictions \cite{wang2025exploring}.

\begin{figure}[htb]
    \centering
    \includegraphics[width=0.75\linewidth]{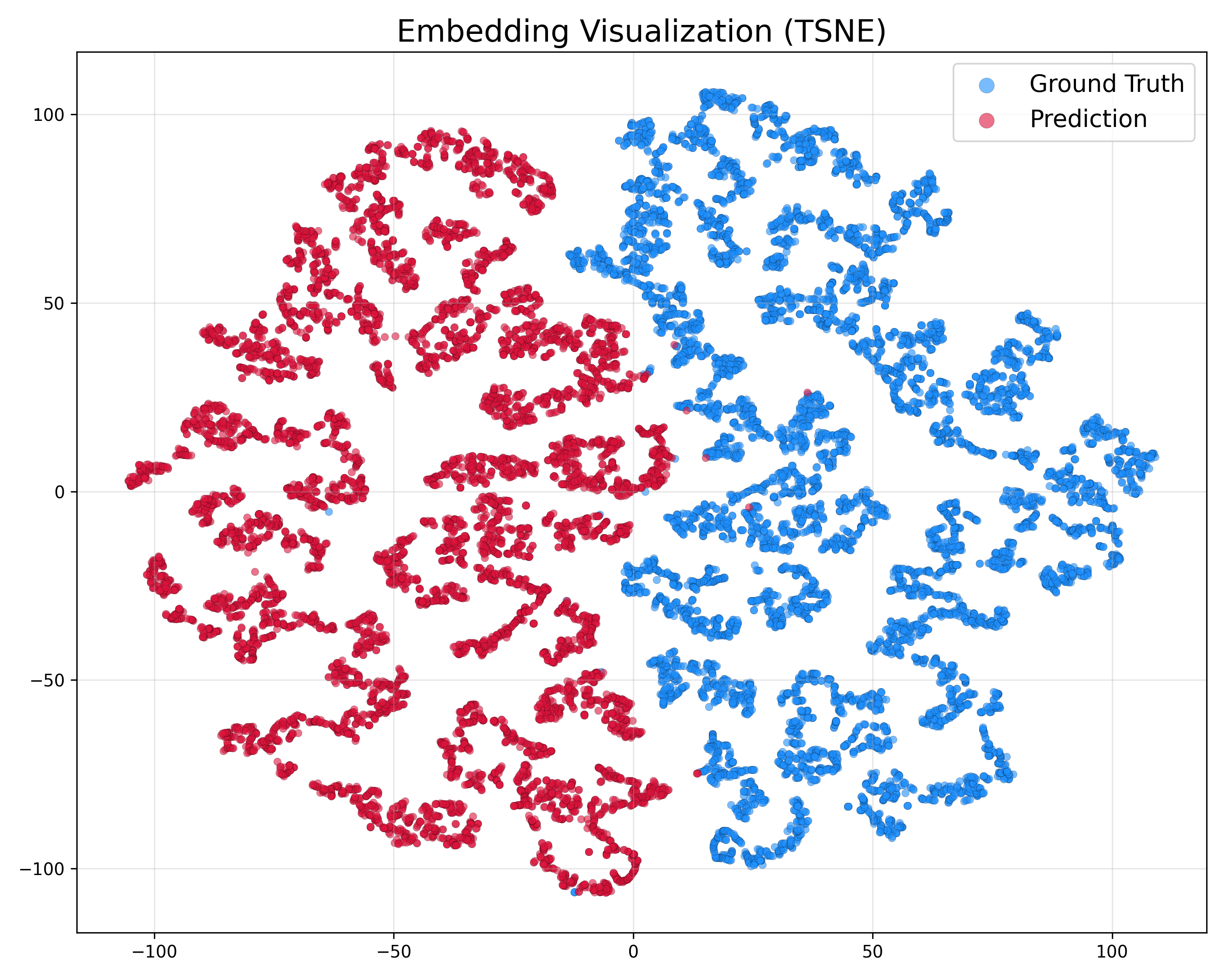}
    \caption{
        t-SNE visualization of embeddings from the AuroraTW encoder. While the output images appear visually similar (see Figure~\ref{fig:Truth_and_Generated}), the model predictions (red) cluster separately from the ground truth (blue). This separation indicates that the predicted state deviates from the data manifold after the first inference step.
    }
    \Description{T-SNE plot showing two distinct clusters for ERA5 data and model predictions.}
    \label{fig:discriminator}
\end{figure}

To quantify these imperceptible differences, we trained a lightweight binary classifier to distinguish between embeddings of original ERA5 data and single-step model predictions (details in Appendix~\ref{sec:probe_details}).
As visualized in Figure~\ref{fig:discriminator}, despite the visual similarity, the classifier achieves near-perfect discrimination ($\approx 100\%$ accuracy) with minimal training.
This reveals that generated predictions do not inhabit the same manifold as the ground truth.
Crucially, this distributional bias is introduced in the very first autoregressive step, setting the stage for the cumulative divergence observed in long-horizon forecasts.
Even though the initial difference is small, the compounding effect amplifies this gap and leads to rapid error divergence at longer horizons.

\subsection{Feedback Loop for Input Distribution Shift and Output Error}
\label{sec:vicious_cycle}

Having quantified the error growth in Section \ref{sec:error_propagation} and \ref{sec:OODbegin}, we now identify the underlying mechanism.
In an autoregressive pipeline, the input for each step depends entirely on the previous output.
This creates a tight coupling between prediction error and the input distribution.
Specifically, errors in the previous step cause the input state for the next step to deviate from the training manifold,
resulting in Out-of-Distribution (OOD) samples.

\begin{figure}[ht]
    \centering
    \includegraphics[width=0.95\linewidth]{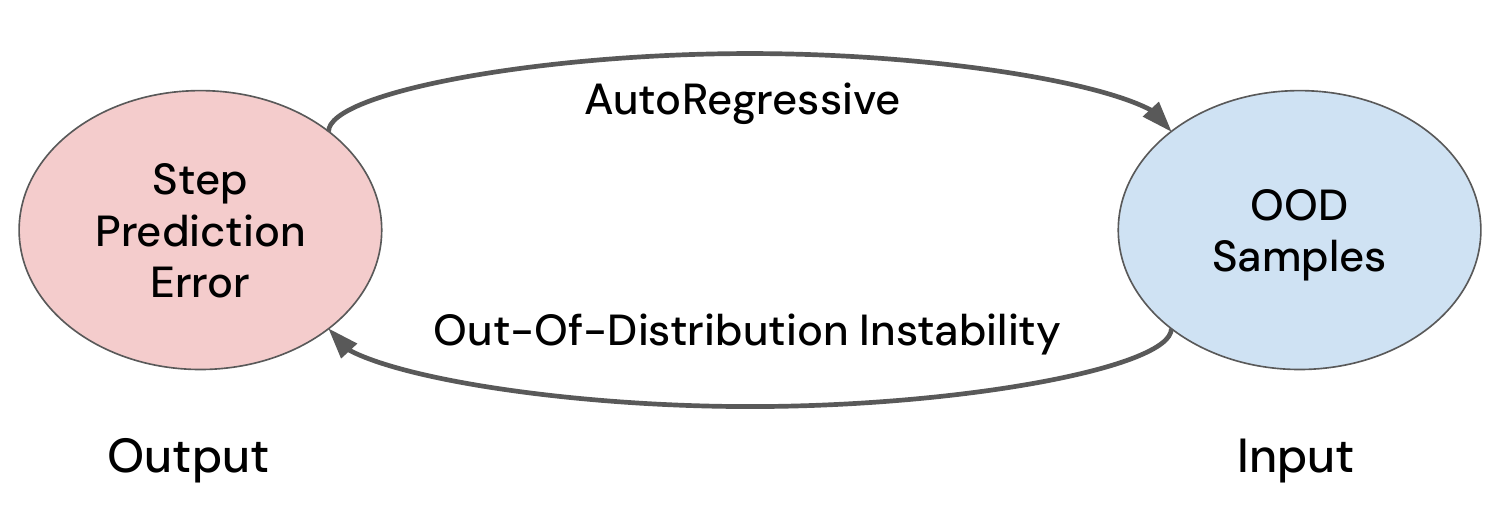}
    \caption{
        \textbf{The Error-Distribution Feedback Loop.} 
        Prediction errors push the subsequent input states out-of-distribution (OOD). 
        This distribution shift causes model instability, leading to larger errors,
        which in turn further deviate the input distribution.
    }
    \Description{Error-Distribution Feedback}
    \label{fig:feedback_loop}
\end{figure}

Since machine learning models typically degrade on OOD inputs, this creates a compounding feedback loop:
(1) prediction errors shift the next input state away from the training distribution;
(2) this distributional shift degrades the model's performance on the next step;
(3) the increased error further biases the input for subsequent steps.

\begin{table}[htb]
    \centering
    \caption{MAE and FID scores between model predictions and ground truth weather states over increasing horizons.}
    \label{tab:FID}
    \begin{tabular}{lrrrrr}
    \toprule
       Variable   & 6hr & 24hr & 96hr & 132hr & 168hr \\
    \midrule
        t2m & 0.599 &  1.057 &  2.141 &  2.404 &  2.609 \\
        u10 & 0.564 &  1.109 &  2.661 &  3.059  & 3.266\\
        t\_850 & 0.397 &  0.770  & 2.128  & 2.499  & 2.789 \\
        z\_500 & 8.535 & 131.595 & 332.714 & 392.902  & 442.777 \\
        \midrule
        FID & 0.007 & 0.085 & 0.552 & 0.868 & 1.105 \\
    \bottomrule
    \end{tabular}
\end{table}

To validate this hypothesis, we quantify the distributional discrepancy between predicted states and the ground truth over increasing autoregressive horizons using the Fréchet Inception Distance (FID).
As shown in Table \ref{tab:FID}, there is a strong positive correlation between autoregressive steps and FID scores,
confirming that the distribution progressively diverges from its learned distribution during the autoregressive process.

\section{Proposed Framework}

\begin{figure*}[t]
    \centering
    \includegraphics[width=0.95\linewidth]{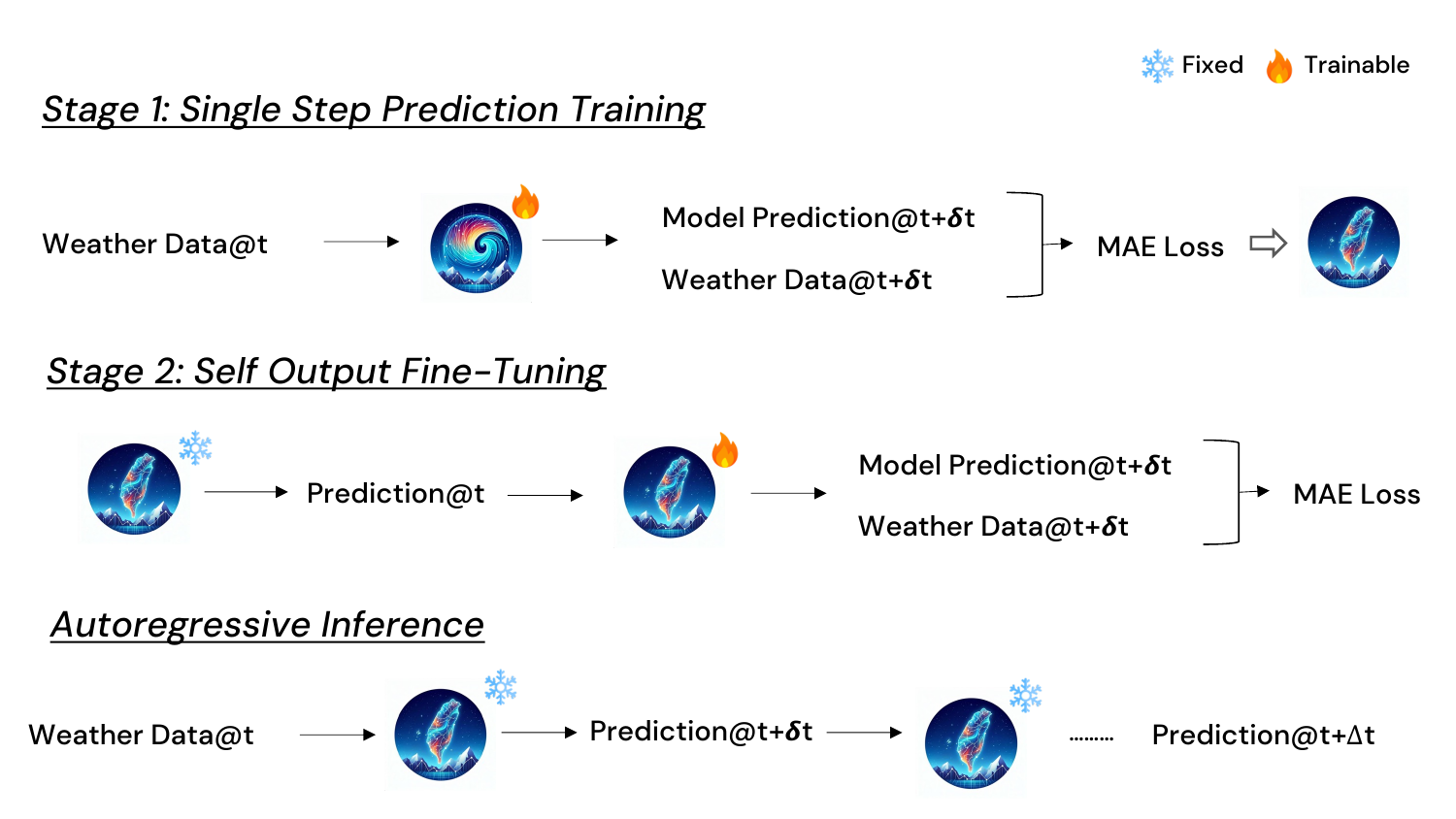}
    \caption{\textbf{Overview of the Self-Output Fine-Tuning (SOFT) framework.} The process consists of two stages: (1) Training a base model on ground truth data to learn single-step dynamics, and (2) Fine-tuning the model on its own generated outputs ($\hat{\mathbf{X}}_t$) to align predictions with the ground truth ($\mathbf{X}_{t+\delta t}$). This alignment mitigates error propagation during autoregressive inference.}
    \label{fig:framework_overview}
    \Description{overview}
\end{figure*}

Based on the analysis in Section \ref{sec:analysisforerrorgrowth}, we attribute error growth in autoregressive pipeline to its error propagation mechanism and negative feedback loops.
Since the model encounters out-of-distribution inputs as early as the first step,
subsequent predictions deviate further from the data manifold.
To address this, we propose \textbf{Self-Output Fine-Tuning (SOFT)}, a simple yet effective finetuning methodology.
Our core idea is to enable forecasting model to adapt to the biased patterns inherent in its own generated outputs.

Unlike computationally expensive approaches that utilize multi-step rollouts or replay buffers,
we construct a single-step distribution alignment task.
Conceptually, this is analogous to \textbf{nipping the error in the bud}.
By correcting small distributional shifts immediately at the single-step level,
we can prevent amplification of errors over long horizons due to the vicious cycle in autoregressive pipeline.
This approach is both strategically targeted and computationally efficient.

\subsection{Stage 1: Training for Single Step Forecasting}

In the first stage of our framework, we train a base model to forecast one step ahead using ERA5 data.
Following previous work \cite{bodnar2025foundation}, we minimize the mean absolute error (\emph{L1Loss}) between predicted states and ground truth grid.
With the notation from Section~\ref{sec:problemsetup}, the training objective is defined as:

\begin{equation}
    \label{eq:st1}
    \theta_{base} = \operatorname*{argmin}_{\theta} \mathop{\mathbb{E}}_{\mathbf{X}_t} \left[ \| f_\theta(\mathbf{X}_{t}) - \mathbf{X}_{t+\delta t} \|_1 \right]
\end{equation}

Training convergence in this stage is critical.
If the base model fails to capture single-step dynamics, the predictions will deviate significantly from ground truth distribution, making the subsequent fine-tuning stage ineffective due to excessive noise.

\subsection{Stage 2: Self-Output Fine-Tuning (SOFT)}

Even if a model achieves low error in Stage 1, it remains susceptible to error growth, as shown in Figure~\ref{fig:overall_figure}.
This occurs because the model is trained on ground truth data but operates on its own generated predictions during autoregressive inference.
As discussed in Section~\ref{sec:OODbegin}, an out-of-distribution signature becomes evident after merely one step.
We therefore propose SOFT, a method that focuses on mitigating the distributional discrepancy between training and inference.
SOFT trains the model on its own predictions,
forcing it to learn how to correct its own artifacts and map biased inputs back to the true future state.
Technically, we transform these OOD samples into pseudo-in-distribution (ID) samples to bridge the distribution gap.

\subsubsection{Generating Biased States.} 

First, we feed a clean sample $\mathbf{X}_{t-\delta t}$ into the pretrained model from Stage 1 (with fixed parameters $\theta_{\text{base}}$) to acquire a predicted input $\hat{\mathbf{X}}_{t}$.
This input contains the model's natural errors and artifacts:
\begin{equation}
    \hat{\mathbf{X}}_{t} = \text{stopgrad}[ f_{\theta_{base}}(\mathbf{X}_{t-\delta t}) ]
\end{equation}
where $\text{stopgrad}$ indicates that model is frozen and gradients are not propagated through this step.

\subsubsection{Fine-Tuning Objective.}
Next, we fine-tune the model to predict the ground truth future $\mathbf{X}_{t+\delta t}$ using the generated input $\hat{\mathbf{X}}_{t}$. The SOFT training objective is defined as:

\begin{equation}
    \label{eq:st2}
    \theta_{SOFT} = \operatorname*{argmin}_{\theta} \mathop{\mathbb{E}}_{\mathbf{X}_t} \left[ \| f_\theta(\hat{\mathbf{X}}_{t}) - \mathbf{X}_{t+\delta t} \|_1 \right]
\end{equation}
where $\hat{\mathbf{X}}_{t} = \text{stopgrad}[f_{\theta_{base}}(\mathbf{X}_{t-\delta t})]$ is the biased input generated by the frozen base model.
Algorithm \ref{alg:soft} (Appendix \ref{sec:appendx_pseudo}) presents the SOFT pseudo-code.

\subsection{Autoregressive Inference}

Once fine-tuned, we deploy the model using a standard autoregressive pipeline.
Recall from Section~\ref{sec:problemsetup} that our goal is to generate a trajectory for a horizon $\Delta t$.
We initialize the process with the ground truth $\mathbf{X}_t$ and recursively apply the learned model:
\begin{equation}
    \hat{\mathbf{X}}_{t+(k)\delta t} = f_{\theta_{SOFT}}(\hat{\mathbf{X}}_{t+(k-1)\delta t}), \quad \text{for } k=1, \dots, K
\end{equation}
with the initialization $\hat{\mathbf{X}}_t = \mathbf{X}_t$.

This loop continues for $K = \Delta t / \delta t$ steps.
Because our model was trained via SOFT, it allows us to extend the rollout horizon significantly (e.g., up to 7 days) without the catastrophic divergence seen in standard baselines.
Furthermore, since we only modify the training input without altering the model architecture,
we do not need to adjust the inference pipeline.

\subsection{Theoretical Insights}

Besides empirical intuition, we also provide a theoretical explanation for why the SOFT loss design works.

Our main goal is to show that, under a linear assumption, the standard sequential rollout training—which optimizes all steps in autoregressive weather prediction—can be upper-bounded by the SOFT loss.

\begin{theorem}
\label{theo:SOFTLOSSUpperBound}
Under a linear setting, for any integer $k \ge 2$, the $k$-step rollout loss $L_{\text{rollout}}$ is upper-bounded by the SOFT loss $L_{\text{SOFT}}$ plus error terms related to model divergence and structural mismatch, such that:
\begin{equation}
    L_{\text{rollout}} \le 2 L_{\text{SOFT}} + 4 \|W\|^2 \|(W - \tilde{W})X_t\|^2 + 4 \|W\|^2 \|(W^{k-1} - W)X_t\|^2
\end{equation}
where $X_t$ denotes the initial condition and $\tilde{W}$ denotes the model weight well-trained by single-step prediction.
\end{theorem}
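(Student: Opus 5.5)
We work in the linear setting. The model is $f_W(x)=Wx$ and the ground-truth dynamics is the linear map $x\mapsto Ax$, so $X_{t+j}=A^jX_t$ with unit step $\delta t$. The frozen single-step model $\tilde W$ produces the SOFT input $\hat X_{t+k-1}=\tilde W X_{t+k-2}$.

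We take the losses to be squared norms. The $k$-step rollout loss is $L_{\text{rollout}}=\|W^kX_t-X_{t+k}\|^2$. The SOFT loss at the matching step is $L_{\text{SOFT}}=\|W\tilde W X_{t+k-2}-X_{t+k}\|^2$. We will need the convention that the SOFT input at the last step is generated from the same initial condition, i.e.\ the comparison point is $W\tilde W X_t$ when $k=2$. For general $k$, the plan is to treat the rollout as first reaching an intermediate state and then applying one SOFT-type step. We will fix whichever identification makes the final term come out as $(W^{k-1}-W)X_t$, namely comparing $W^kX_t$ with $W\cdot W X_t$.

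The core is a two-stage triangle decomposition using $\|a+b\|^2\le 2\|a\|^2+2\|b\|^2$. First write
\[
W^kX_t-X_{t+k}=\bigl(W\hat X-X_{t+k}\bigr)+\bigl(W^kX_t-W\hat X\bigr),
\]
where $\hat X=\tilde W X_t$ is the self-output input. This gives
\[
L_{\text{rollout}}\le 2L_{\text{SOFT}}+2\|W(W^{k-1}X_t-\tilde W X_t)\|^2 .
\]
Next, submultiplicativity bounds the second term by $2\|W\|^2\|(W^{k-1}-\tilde W)X_t\|^2$. Then split once more:
\[
(W^{k-1}-\tilde W)X_t=(W^{k-1}-W)X_t+(W-\tilde W)X_t .
\]
Applying the same inequality gives
\[
2\|W\|^2\cdot 2\bigl(\|(W-\tilde W)X_t\|^2+\|(W^{k-1}-W)X_t\|^2\bigr),
\]
which produces exactly the constants $4\|W\|^2$ in both error terms.

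The main obstacle is the bookkeeping of what $L_{\text{SOFT}}$ is compared against, since the statement leaves the SOFT target implicit. For the first split to yield precisely $L_{\text{SOFT}}$, the SOFT term must be $\|W\tilde WX_t-X_{t+k}\|^2$. That is, the fine-tuning target is aligned with the rollout target at the same time index, which is a modelling convention rather than something derived. We will therefore state this definition explicitly at the start, noting that for $k=2$ it coincides literally with the SOFT objective in \eqref{eq:st2} under squared loss. The remaining steps are routine norm inequalities and need no earlier results beyond linearity. Corollary~\ref{cor:ea_nstep} is not needed, although it motivates interpreting $(W^{k-1}-W)X_t$ as the structural mismatch term.
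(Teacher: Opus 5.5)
Your argument is correct and is essentially the paper's proof: insert the intermediate point $W\tilde W X_t$, pull out $\|W\|$ by submultiplicativity, split $W^{k-1}-\tilde W=(W^{k-1}-W)+(W-\tilde W)$, and use $(a+b)^2\le 2a^2+2b^2$ twice to get the constants $2$ and $4\|W\|^2$. The differences are cosmetic. You apply the squared inequality directly to vectors rather than the triangle inequality followed by squaring, and you take the common target to be $X_{t+k}$ where the paper writes $X_{t+2}$ for both losses. You also correctly see that the argument only needs the two losses to share a target and the SOFT input to be $\tilde W X_t$; your first tentative definition, with input $\tilde W X_{t+k-2}$, should simply be dropped.
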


Through this analysis, we demonstrate that fine-tuning with the model's own output is supported not only by empirical intuition but also by theoretical understanding.
Although this analysis assumes a simplified linear case, it justifies our method from a theoretical perspective and lays the groundwork for future work. 
The detailed derivation is provided in Section~\ref{sec:app_for_SOFTUpperbound}.

\section{Experiments}

\begin{table*}[tb]
\centering
\caption{Main results of spatial MAE for our Aurora 168-hour forecasts across TW, EU, and NA regions. The best numbers in each column are shown in \textbf{bold} and the second best number are \underline{underlined}}.
\label{tab:loss_by_var_168h}
\begin{tabular}{l *{12}{r}}
\toprule			
& \multicolumn{4}{c}{TW} & \multicolumn{4}{c}{EU} & \multicolumn{4}{c}{NA} \\
\cmidrule(lr){2-5} \cmidrule(lr){6-9} \cmidrule(lr){10-13}
Method & t2m & u10 & t\_850 & z\_500 & t2m & 10u & t\_850 & z\_500 & t2m & u10 & t\_850 & z\_500 \\
\midrule

SingleStepPrediction & 3.029 & 3.718 & 3.255 & 504.107 & 5.250 & 3.863 & 5.531 & 1513.869 & 5.296 & 4.247 & 5.916 & 1417.180 \\
+Rollout & 2.898 & 3.447 & 3.050 & 474.561 & 4.828 & 3.700 & 5.298 & 1376.574 & 5.075 & 4.053 & 5.662 & 1316.752 \\
+ReplayBuffer & 2.728 & 3.333 & \underline{2.923} & \underline{446.555} & \textbf{4.566} & \textbf{3.528} & \textbf{4.996} & \textbf{1288.561} & \underline{4.729} & 3.929 & \underline{5.444} & \textbf{1213.615} \\
+GaussianNoiseAug & \underline{2.691} & \underline{3.293} & 2.934 & 459.226 & 5.068 & 3.702 & 5.837 & 1385.477 & 4.885 & \underline{3.877} & 5.752 & 1312.992 \\
+FeatureMatching & 3.034 & 3.844 & 3.249 & 498.506 & 5.317 & 3.967 & 5.605 & 1525.991 & 5.385 & 4.231 & 5.855 & 1354.967 \\
+SOFT (ours) & \textbf{2.609} & \textbf{3.266} & \textbf{2.789} & \textbf{442.777} & \underline{4.762} & \underline{3.570} & \underline{5.187} & \underline{1306.609} & \textbf{4.657} & \textbf{3.824} & \textbf{5.391} & \underline{1217.320} \\


\bottomrule
\end{tabular}
\end{table*}

\label{sec:experiments}

In this section, we evaluate the effectiveness of the proposed SOFT framework on regional weather forecasting.
We structure our empirical analysis around following core questions:

\begin{itemize}
    \item \textbf{Q1:} Does SOFT mitigate error growth in autoregressive inference more effectively than state-of-the-art strategies?
    \item \textbf{Q2:} Does SOFT align the predicted distribution with the ground truth manifold and reduce out-of-distribution shift over long horizons?
    \item \textbf{Q3:} Is SOFT compatible with existing training strategies and diverse model architectures?
\end{itemize}

Section~\ref{sec:expsetup} details the experimental setup.
We address Research Question 1 in Section~\ref{sec:main_results}, Question 2 in Section~\ref{distribution}, and Questions 3 in Section~\ref{sec:ablationstudy}.

\subsection{Experimental Setup}
\label{sec:expsetup}

\subsubsection{Datasets} We evaluate our framework using the ERA5 reanalysis dataset \cite{hersbach2020era5}, specifically curating a regional subset centered on East Asia surrounding Taiwan, North America and Europe ($0.25^\circ$ resolution, detailed in Table \ref{tab:era5_details}, Appendix \ref{sec:expsetup_appendix_data}) to simulate a practical local deployment.

\subsubsection{Architectures} We utilize the pre-trained Microsoft Aurora \cite{bodnar2025foundation} as our primary backbone, adapting it to our region via standard supervised learning. To demonstrate architectural generalization, we also employ a re-implementation of Pangu-Weather \cite{bi2023accurate}, trained from scratch on our regional split.

\subsubsection{Metrics} We consider both deterministic error value and distributional performance in this paper. We assess deterministic forecast accuracy using spatially-averaged Mean Absolute Error (MAE). To quantify distributional realism and the butterfly effect drift, we employ the Fréchet Inception Distance (FID), measuring the Wasserstein-2 distance between the feature embeddings of predicted and ground-truth atmospheric states.

\subsubsection{Baselines}
\label{exp:baseline}
To validate SOFT effectiveness, we compare it against common paradigms in long term training strategy in DLWP.

\begin{itemize}
    \item \textbf{Single Step Prediction Training:} The baseline model only trained with standard next-step supervision ($t \rightarrow t+ \delta t$). This serves as the lower bound, representing a model with no specific long-term optimization.
    \item \textbf{Rollout Tuning:} The dominant strategy in SOTA models~\cite{lam2023graphcast}, where predictions are unrolled for $K$ steps during training. While effective, it incurs linear growth for both memory and training cost.
    \item \textbf{Replay Buffer:} Maintain a buffer of intermediate states to approximate long-horizon training without direct unrolling.
    \item \textbf{Feature Matching Loss:} A Generative Adversarial Network framework where a discriminator penalized the model embedding for deviations from the realistic atmospheric manifold.
    \item \textbf{Gaussian Noise Augmentation:} We inject Gaussian noise into inputs during training to prevent overfitting to deterministic trajectories and improve robustness against distributional shifts.
\end{itemize}

Appendix \ref{sec:expsetup_appendix} lists the implementation details and hyperparameters for all baselines.

\begin{table}[htb]
\caption{Win rate table for different methods comparison.}
\centering
\begin{tabular}{lrrrrr}
\toprule
 Method & 6hr & 24hr & 96hr & 132hr & 168hr \\
\midrule
SingleStepPrediction & 0 & 0 & 0 & 0 & 0 \\
+Rollout & 2 & 1 & 0 & 0 & 0 \\
+ReplayBuffer & 5 & 11 & 1 & 1 & 1 \\
+GaussianNoiseAug & 0 & 1 & 6 & 7 & 9 \\
+FeatureMatching & 0 & 0 & 0 & 0 & 0 \\
+SOFT (ours) & \textbf{37} & \textbf{31} & \textbf{37} & \textbf{36} & \textbf{34} \\
\bottomrule
\end{tabular}
\label{tab:win_rate_comparsion}
\end{table}

\subsection{Comparison with State-of-the-Arts}
\label{sec:main_results}

\subsubsection{Quantitative MAE Results}

Table~\ref{tab:loss_by_var_168h} presents the MAE performance for 168-hour autoregressive forecasting.
We report results on four representative variables: t2m, u10, t\_850, and z\_500.
These are standard metrics for long-range prediction and are critical for human activity.

The proposed SOFT framework achieves competitive performance across all lead times and variables.
While MultiStep rollout and ReplayBuffer strategies are effective,
they incur additional computational costs and optimization instability.
Conversely, distributional enhancement methods such as Feature Matching and Gaussian Noise Augmentation struggle to fully correct distribution mismatch.
SOFT successfully balances computational efficiency with forecasting accuracy.

\subsubsection{Comparison across all variables}
We also report the number of variables that rank first in the 168-hour forecast.
As shown in Table~\ref{tab:win_rate_comparsion}, the SOFT series wins across most variables.
This shows that our method is not limited to specific variables or leading times, but provides a general improvement for these models.

\begin{figure}[htb]
    \centering
    \includegraphics[width=.85\linewidth]{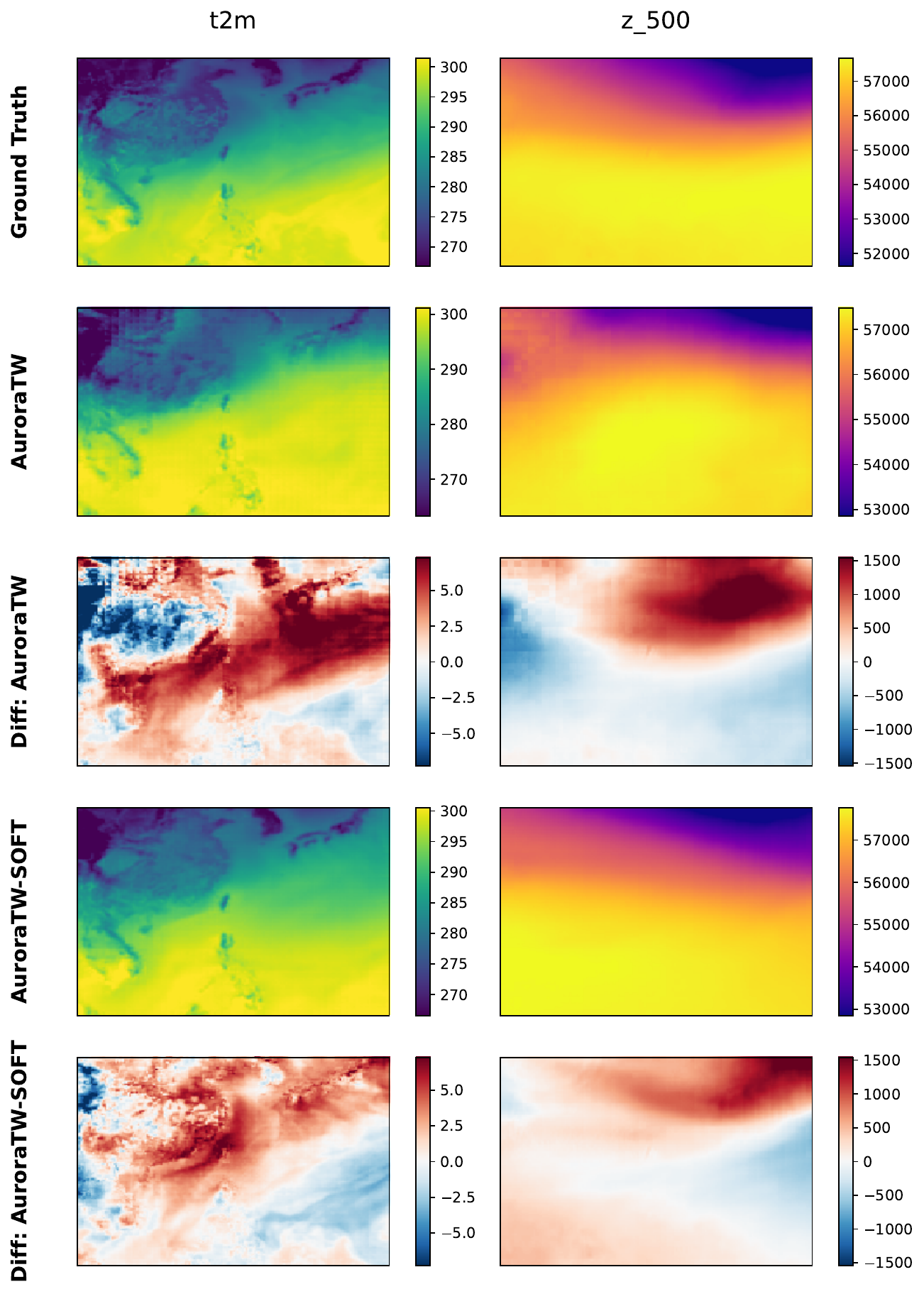}
    \caption{\textbf{Prediction Error Visualization.} Visualization of absolute error for AuroraTW and AuroraTW-SOFT relative to ERA5 (ground truth) at a 168-hour lead time. The proposed SOFT method (rows 4-5) shows reduced error compared to the baseline (rows 2-3).}
    \Description{visualize}
    \label{fig:error_visualization}
\end{figure}

\subsubsection{Visualization}
Figure~\ref{fig:error_visualization} shows the error distribution for a representative case study on 2023/01/28 12:00 UTC at a 168-hour lead time. The baseline model exhibits large errors (darker regions), whereas SOFT shows significantly reduced error.

\subsection{Mitigating Distributional Gap}
\label{distribution}

\begin{table}[ht!]
    \centering
    \caption{\textbf{FID evaluation for three regions at a 168-hour horizon.} Lower values indicate better alignment with the realistic atmospheric manifold.}
    \begin{tabular}{lccc}
    \toprule
    Method & TW & EU & NA \\
    \midrule
        SingleStepPrediction & 1.105 & 2.741 & 3.431 \\
        +Rollout     & 0.838 & 1.509 & 2.510 \\
        +ReplayBuffer  & 0.790 & 1.159 & 1.915 \\
        +GaussianNoise & 0.886 & 2.456 & 2.064 \\
        +FeatureMatching & 1.173 & 2.663 & 2.922 \\
        +SOFT (ours)   & 0.737 & 1.233 & 1.556 \\
    \bottomrule
    \end{tabular}
    \label{tab:FID_region}
\end{table}

A core premise of this work is that autoregressive models suffer from distributional shift — where predictions drift away from the manifold of natural weather states over time.
We therefore aim to demonstrate that SOFT effectively address the distribution problem caused by the autoregressive compounding errors.

We evaluate the Fréchet Inception Distance (FID) between predicted rollout sequences and ground truth sequences.
A lower FID indicates that the predicted weather patterns are distributionally closer to the ground truth.
As shown in Table~\ref{tab:FID_region}, both SOFT and the other strategies achieve significantly lower FID scores at later forecast stages compared to baselines.
This supports our hypothesis that error growth is driven by distributional drift, and SOFT effectively mitigates this issue.

\subsection{Ablation Study}
\label{sec:ablationstudy}

\subsubsection{Compatibility with Different Strategies}
While SOFT is effective as a standalone method, we investigate whether it complements other long-horizon strategies.
We apply SOFT on top of checkpoints pretrained with the baseline methods discussed in Section~\ref{exp:baseline}.

As shown in Table~\ref{tab:loss_add}, SOFT improves performance in the majority of tested scenarios, effectively enhancing strategies like multi-step rollout and replay buffer with minimal additional training.
This suggests that SOFT functions as a robust plug-and-play extension that enhances distributional robustness for various underlying training paradigms.

\begin{table}[htb]
\centering
\caption{The study for SOFT generalization ability. Values in italics indicate the relative percentage change compared to the respective baseline for 168 hour forecast.}
\label{tab:loss_add}
\begin{tabular}{l *{4}{r}}
\toprule			
Method & t2m & u10 & t\_850 & z\_500 \\
\midrule
SingleStepPrediction & 3.029 & 3.718 & 3.255 & 504.107 \\
+SOFT & 2.609 & 3.266 & 2.789 & 442.777 \\
\quad \textit{(\% Change)} & \textit{(-13.9\%)} & \textit{(-12.2\%)} & \textit{(-14.3\%)} & \textit{(-12.2\%)} \\
\midrule
+Rollout & 2.898 & 3.447 & 3.050 & 474.561 \\
+Rollout+SOFT &  2.785 & 3.362 & 2.901 & 468.248 \\
\quad \textit{(\% Change)} & \textit{(-3.9\%)} & \textit{(-2.5\%)} & \textit{(-4.9\%)} & \textit{(-1.3\%)} \\
\midrule
+ReplayBuffer & 2.728 & 3.333 & 2.923 & 446.555 \\
+ReplayBuffer+SOFT & 2.694 & 3.311 & 2.876 & 459.404 \\
\quad \textit{(\% Change)} & \textit{(-1.2\%)} & \textit{(-0.7\%)} & \textit{(-1.6\%)} & \textit{(+2.9\%)} \\
\bottomrule
\end{tabular}
\end{table}

\subsubsection{Compatibility with Different Architectures}
We apply SOFT to the Pangu-Weather architecture to demonstrate architectural generalization.
Table~\ref{tab:ablation_arch_table} shows that Pangu-Weather achieves consistent gains when SOFT is applied,
confirming the method's versatility across different backbone architectures.

\begin{table}[htb]
\centering
\caption{Ablation study for applying SOFT on different architecture. Values in italics indicate the relative percentage change compared to the PanguTW baseline for 168 hour forecasts.}
\label{tab:ablation_arch_table}
\setlength{\tabcolsep}{4pt}
\begin{tabular}{l r r r r}
\toprule
& t2m & 10u & t\_850 & z\_500 \\
\midrule
PanguTW & 29.389 & 40.051 & 10.749 & 2333.240 \\
+SOFT (ours) & \textbf{27.902} & \textbf{25.607} & \textbf{4.451} & \textbf{1475.974} \\
\quad \textit{(\% Change)} & \textit{(-5.1\%)} & \textit{(-36.1\%)} & \textit{(-58.6\%)} & \textit{(-36.7\%)} \\
\bottomrule
\end{tabular}
\end{table}

\section{Conclusion}
\label{sec:conclusion}

In this paper, we identify that error accumulation in autoregressive deep learning weather prediction is fundamentally driven by distributional shift.
We theoretically analyze this phenomenon, showing that propagating errors progressively accumulate, pushing subsequent predictions away from the ground truth manifold.
Empirical evidence further demonstrates that the distribution begins to deviate from the ground truth trajectory as early as the first prediction step.

To mitigate this distributional mismatch between training and inference, we propose Self-Output Fine-Tuning (SOFT). This simple yet effective strategy explicitly exposes the model to its own one-step predictive bias during training, effectively bridging the distribution gap.
Extensive experiments on the ERA5 dataset validate that SOFT performs competitively across various weather variables and lead times. By aligning step-wise distributions, we successfully reduce both deterministic and distributional errors, as evidenced by improved MAE and FID scores.

Crucially, SOFT is architecture-agnostic and computationally efficient, prioritizing a step-wise training objective over the sequential objectives commonly used in prior work.
This design is practical in terms of cost and ensures ground-target adherence, consistent with our theoretical analysis. We demonstrate that this step-wise goal remains competitive with sequential objectives while utilizing the same computational budget.
These findings suggest that long-range forecasting research should prioritize distributional alignment over complex architectural modifications.
We hope this work encourages the community to view error accumulation through the lens of data distribution, paving the way for more reliable weather models in climate science.


\section*{Limitations and Ethical Considerations}
While validated on standard ERA5 benchmarks, our experiments are currently limited to a regional East Asia subset. Global-scale deployment remains future work. Ethically, we anticipate positive societal impacts on disaster preparedness and climate monitoring. Furthermore, our framework contributes to Green AI by significantly reducing training costs compared to multi-step rollouts. Utilizing public reanalysis data without human subjects, this work poses no privacy, discrimination, or conflict of interest risks.

\section*{Generative AI Disclosure}
We utilized generative AI tools to assist with the preparation of this manuscript. Specifically, these tools were used to refine the writing (grammar and clarity), generate components of the framework overview diagram, and assist with coding tasks. The authors reviewed all AI-generated content to ensure accuracy and retain full responsibility for the paper's content.

\bibliographystyle{ACM-Reference-Format}
\bibliography{ref}



\appendix
\section*{Appendix}

\section{Derivation of Error Propagation} \label{sec:app_for_err_acc}

In this section, we provide the detailed derivation for Proposition \ref{prop:error_prop}.

\paragraph{Definitions.}
Recall the definitions from the main text:
\begin{itemize}
    \item Ground truth dynamics: $\mathbf{x}_t = g(\mathbf{x}_{t-1})$.
    \item Learned model: $\hat{\mathbf{x}}_t = f_\theta(\hat{\mathbf{x}}_{t-1})$.
    \item Model error function: $\epsilon(\mathbf{x}) \triangleq f_\theta(\mathbf{x}) - g(\mathbf{x})$.
    \item Cumulative error at step  $t$: $\mathbf{e}_t \triangleq \hat{\mathbf{x}}_t - \mathbf{x}_t$.
    \item Single-step error at step $t$: $\boldsymbol{\epsilon}_t \triangleq f_\theta(\mathbf{x}_{t-1}) - \mathbf{x}_t$.
\end{itemize}

\begin{proof}
We start by expanding the cumulative error at step $t$:
\begin{equation}
    \mathbf{e}_t = \hat{\mathbf{x}}_t - \mathbf{x}_t
\end{equation}
Substituting the model dynamics for $\hat{\mathbf{x}}_t$ and the ground truth dynamics for $\mathbf{x}_t$:
\begin{equation}
    \mathbf{e}_t = f_\theta(\hat{\mathbf{x}}_{t-1}) - g(\mathbf{x}_{t-1})
\end{equation}
Since $\hat{\mathbf{x}}_{t-1} = \mathbf{x}_{t-1} + \mathbf{e}_{t-1}$, we can rewrite the first term as $f_\theta(\mathbf{x}_{t-1} + \mathbf{e}_{t-1})$. Assuming $\mathbf{e}_{t-1}$ is small, we perform a first-order Taylor expansion of $f_\theta$ around the ground truth state $\mathbf{x}_{t-1}$:
\begin{equation}
    f_\theta(\mathbf{x}_{t-1} + \mathbf{e}_{t-1}) \approx f_\theta(\mathbf{x}_{t-1}) + \mathbf{J}_{f}(\mathbf{x}_{t-1}) \mathbf{e}_{t-1}
\end{equation}
where $\mathbf{J}_f$ is the Jacobian of the model $f_\theta$. Substituting this back into the error equation:
\begin{equation}
    \mathbf{e}_t \approx f_\theta(\mathbf{x}_{t-1}) + \mathbf{J}_{f}(\mathbf{x}_{t-1}) \mathbf{e}_{t-1} - g(\mathbf{x}_{t-1})
\end{equation}
We can rearrange the terms to group the single-step error:
\begin{equation}
    \mathbf{e}_t \approx \underbrace{(f_\theta(\mathbf{x}_{t-1}) - g(\mathbf{x}_{t-1}))}_{\boldsymbol{\epsilon}_t} + \mathbf{J}_{f}(\mathbf{x}_{t-1}) \mathbf{e}_{t-1}
\end{equation}
Recall that by definition, the model output is the sum of the ground truth and the error function: $f_\theta(\mathbf{x}) = g(\mathbf{x}) + \epsilon(\mathbf{x})$. 
By the linearity of differentiation, the Jacobian of the model $\mathbf{J}_f$ decomposes into:
\begin{equation} \label{eq:jacobian_decomp}
    \mathbf{J}_f(\mathbf{x}) = \nabla_{\mathbf{x}} (g(\mathbf{x}) + \epsilon(\mathbf{x})) = \mathbf{J}_g(\mathbf{x}) + \mathbf{J}_\epsilon(\mathbf{x}).
\end{equation}
We previously derived the error update rule using the model Jacobian: 
$\mathbf{e}_t \approx \boldsymbol{\epsilon}_t + \mathbf{J}_f(\mathbf{x}_{t-1}) \mathbf{e}_{t-1}$.
Substituting Eq. \ref{eq:jacobian_decomp} into this update rule yields the final recurrence relation:
\begin{equation}
    \mathbf{e}_t \approx \boldsymbol{\epsilon}_t + (\mathbf{J}_g(\mathbf{x}_{t-1}) + \mathbf{J}_\epsilon(\mathbf{x}_{t-1})) \mathbf{e}_{t-1}.
\end{equation}

\end{proof}

\section{Derivation of SOFT Loss Upper Bound} \label{sec:app_for_SOFTUpperbound}

In this section, we provide the detail process for getting theorem \ref{theo:SOFTLOSSUpperBound}.

\begin{proof}
We begin by expressing the $k$-step rollout loss and applying the Triangle Inequality by adding and subtracting the prediction of the surrogate model, $W \tilde{W} X_t$:
\begin{equation}
    \|W^k X_t - X_{t+2}\| \le \|W^k X_t - W \tilde{W} X_t\| + \|W \tilde{W} X_t - X_{t+2}\|
\end{equation}

We recognize the second term on the right-hand side as the square root of our SOFT surrogate loss, $\sqrt{L_{\text{SOFT}}}$. For the first term, we factor out the matrix $W$:
\begin{equation}
    \|W^k X_t - W \tilde{W} X_t\| = \|W(W^{k-1} - \tilde{W})X_t\|
\end{equation}

Using the submultiplicativity property of the chosen matrix norms, we can bound this term:
\begin{equation}
    \|W(W^{k-1} - \tilde{W})X_t\| \le \|W\| \cdot \|(W^{k-1} - \tilde{W})X_t\|
\end{equation}

To isolate the divergence between the current model $W$ and the fixed model $\tilde{W}$, we add and subtract $W$ within the rightmost term:
\begin{equation}
    W^{k-1} - \tilde{W} = (W^{k-1} - W) + (W - \tilde{W})
\end{equation}

Applying the Triangle Inequality again yields:
\begin{equation}
    \|(W^{k-1} - \tilde{W})X_t\| \le \|(W^{k-1} - W)X_t\| + \|(W - \tilde{W})X_t\|
\end{equation}

Substituting this back into our original inequality, we have:
\begin{equation}
    \|W^k X_t - X_{t+2}\| \le \sqrt{L_{\text{SOFT}}} + \|W\| \left( \|(W^{k-1} - W)X_t\| + \|(W - \tilde{W})X_t\| \right)
\end{equation}

To relate this back to the squared loss functions, we square both sides. We utilize the standard algebraic inequality $(a+b)^2 \le 2a^2 + 2b^2$ twice. 

First, applying it to split the surrogate loss from the error terms:
\begin{equation}
    L_{\text{rollout}} \le 2 L_{\text{SOFT}} + 2 \|W\|^2 \left( \|(W^{k-1} - W)X_t\| + \|(W - \tilde{W})X_t\| \right)^2
\end{equation}

Second, applying it to expand the two inner error terms:
\begin{equation}
    \left( \|(W^{k-1} - W)X_t\| + \|(W - \tilde{W})X_t\| \right)^2 \le 2 \|(W^{k-1} - W)X_t\|^2 + 2 \|(W - \tilde{W})X_t\|^2
\end{equation}

Substituting this expansion back into the squared bound yields the final result:
\begin{equation}
    L_{\text{rollout}} \le 2 L_{\text{SOFT}} + 4 \|W\|^2 \|(W - \tilde{W})X_t\|^2 + 4 \|W\|^2 \|(W^{k-1} - W)X_t\|^2
\end{equation}
This completes the proof.
\end{proof}

\section{Experimental Setup}
\label{sec:expsetup_appendix}

\subsection{Datasets}
\label{sec:expsetup_appendix_data}
We use the ERA5 reanalysis dataset, a standard benchmark for data-driven weather forecasting.
ERA5 provides hourly estimates of atmospheric variables on a $0.25^\circ$ grid.
We retrieved the data from the ECMWF Climate Data Store (CDS) using the \texttt{cdsapi} client, downloading all files in NetCDF (\texttt{.nc}) format.

We curate regional subsets of ERA5 for different areas, including East Asia surrounding Taiwan, North America and Europe. Details are provided in Table~\ref{tab:era5_details}.
This setup reflects a practical deployment scenario where global foundation models are adapted to specific local regions.

\begin{table}[htb]
\caption{ERA5 Dataset Configuration for Regional Fine-tuning.}
\centering
\begin{tabular}{ll}
\toprule
\textbf{Attribute}           & \textbf{Specification}                       \\
\midrule
Spatial Resolution           & $0.25^\circ$ ($\approx 28$ km)               \\
Temporal Resolution          & 1 hour                                       \\
TW Region                & $5^\circ$N--$39.75^\circ$N, $100^\circ$E--$144.75^\circ$E \\
EU Region                & $35^\circ$N--$69.75^\circ$N, $0^\circ$E--$44.75^\circ$E \\
NA Region                & $30^\circ$N--$64.75^\circ$N, $115^\circ$E--$159.75^\circ$E \\
Grid Size                    & $140 \times 180$                             \\
\midrule
Training Period              & 2013--2018                                   \\
Validation Period            & 2022                                         \\
Testing Period               & 2023                                         \\
\midrule
Upper-air Variables          & u, v, t, q, z                                \\
Pressure Levels (hPa)        & 50, 250, 500, 600, 700, 850, 925             \\
Surface Variables            & u{10}, v{10}, t{2m}, \text{msl}              \\
\bottomrule
\end{tabular}
\label{tab:era5_details}
\end{table}

\subsection{Model Architecture}

We utilize the pre-trained Microsoft Aurora model as our backbone.
Aurora employs a 3D Swin Transformer within a Perceiver IO framework to capture multi-scale atmospheric dynamics.
We first adapt this global model to our regional domain via standard one-step supervised learning.
This provides a strong baseline for subsequent fine-tuning strategies.

To evaluate architectural generalization (RQ3), we also employ Pangu-Weather.
Pangu utilizes a 3D Earth-Specific Transformer (3DEST) to process volumetric data.
As the official implementation is not open-source,
we re-implemented the architecture based on the published pseudo-code and trained it from scratch on our regional dataset.

\subsection{Evaluation Metrics}
Our primary metric for forecast accuracy is the Mean Absolute Error (MAE),
averaged over spatial dimensions.
Let $\mathbf{X}_t \in \mathbb{R}^{H \times W \times C}$ be the ground-truth state at time $t$, and $\hat{\mathbf{X}}_t$ be the prediction.
The loss is defined as:
\[
\mathcal{L}_{\text{MAE}} = \frac{1}{H \cdot W \cdot C} \sum_{h=1}^{H} \sum_{w=1}^{W} \sum_{c=1}^{C} \left| \mathbf{X}_t^{(h,w,c)} - \hat{\mathbf{X}}_t^{(h,w,c)} \right|
\]
where $H, W, C$ denote latitude, longitude, and channel dimensions, respectively.

To assess whether predicted weather patterns remain realistic over long horizons, we employ the FID score.
FID measures the Wasserstein-2 distance between the multivariate Gaussian distributions of the ground truth ($\mathcal{D}_r$) and predicted ($\mathcal{D}_p$) data in a deep feature space.
It is computed as:
\[
\text{FID} = \|\mu_r - \mu_p\|^2 + \text{Tr}\left( \Sigma_r + \Sigma_p - 2(\Sigma_r \Sigma_p)^{1/2} \right)
\]
where $(\mu_r, \Sigma_r)$ and $(\mu_p, \Sigma_p)$ denote the mean and covariance of the feature embeddings for the real and predicted samples, respectively.
Lower FID values indicate better alignment with the ground truth distribution, suggesting reduced distributional shift during autoregressive inference.

\subsection{Implementation Details}
\label{sec:implementation}

\subsubsection{Baseline Training Setting}
We first fine-tune the global Aurora foundation model on the ERA5 subset for 400 epochs to obtain the well-trained model which works for single step weather prediction.
We then use this as the base for further fine-tuning with different long-term strategies for an additional 50 epochs.

We follow the default normalization settings from the Aurora repository.
We adopt a learning rate schedule with linear warmup (first 10\% of steps) followed by cosine annealing.
The peak learning rate is $3 \times 10^{-5}$ with a weight decay of $1 \times 10^{-3}$.
Batch size is set to 32 except rollout, which is configured to 16 with gradient accumulation step 2 due to GPU memory consistency. 
All experiments use 8 NVIDIA H200 GPUs.
Single step prediction training requires approximately 24 hours, while each subsequent strategy requires approximately 4 to 6 hours.

\subsubsection{Baseline Hyperparameters}
Key hyperparameters for the baseline comparisons are as follows:
\begin{itemize}
    \item \textbf{Rollout:} Rollout horizon $K=2$ steps.
    \item \textbf{Replay Buffer:} Buffer size 250, fine-tuning rate 1, replay start step 0.
    \item \textbf{Feature Matching Loss:} ResNet50 discriminator (via \texttt{timm}). The discriminator is updated every 10 epochs for a duration of 10 epochs. We adopt all layer discriminator embeddings.
    \item \textbf{Gaussian Noise Augmentation:} Standard deviation $\sigma = 0.05$.
\end{itemize}

\subsubsection{Evaluation Metrics}
To calculate the Fréchet Inception Distance (FID),
we use feature embeddings extracted from the 3D Swin Transformer backbone encoder of \textit{AuroraTW}.
We use this encoder for all FID calculations reported in this work.

\subsubsection{Discriminator Probe Training}
\label{sec:probe_details}
To quantify distribution shift, we trained a binary classifier on embeddings extracted from the frozen AuroraTW encoder.
The probe is a 2 layer residual MLP with GELU, 1D Batch Normalization, and Dropout.
It will be trained for 50 epochs using Adam and binary cross-entropy loss.
On a balanced test set of ERA5 and single-step predictions
the model achieved near 100\% accuracy, confirming the embeddings are distinct.

\subsubsection{Ablation Study Training Setting}
For the ablation studies to test SOFT generalization, we apply the SOFT strategy for an additional 25 epochs based on current existing checkpoint.
For Pangu-Weather, the single-step training mirrors the Aurora finetuning protocol but is trained from scratch without a global checkpoint.
Other training factor (e.g. learning rate, optimizer and weight decay setting) remain unchanged.

\begin{algorithm}[htb]
\caption{Pseudo code of Self-Output Fine-Tuning (SOFT)}
\label{alg:soft}
\begin{algorithmic}[1]
\Require Pretrained base model $\theta_{base}$
\State Initialize trainable model $f_{\theta}$ with weights $\theta \leftarrow \theta_{base}$
\State Initialize frozen generator $f_{\text{gen}}$ with weights $\theta_{base}$
\While{not converged}
    \State Sample batch $(\mathbf{X}_{t-\delta t}, \mathbf{X}_t, \mathbf{X}_{t+\delta t})$
    \State \textcolor{gray}{// 1. Generate gradient-detached states (using frozen model)}
    \State $\hat{\mathbf{X}}_t \leftarrow \text{stopgrad}[f_{\text{gen}}(\mathbf{X}_{t-\delta t})]$
    \State \textcolor{gray}{// 2. Predict future using biased input and compute loss}
    \State $\mathcal{L} \leftarrow \| f_{\theta}(\hat{\mathbf{X}}_t) - \mathbf{X}_{t+\delta t} \|_1$
    \State \textcolor{gray}{// 3. Update parameters}
    \State $\theta \leftarrow \text{Optimizer}(\theta, \nabla_{\theta} \mathcal{L})$
\EndWhile
\State \Return $\theta$
\end{algorithmic}
\end{algorithm}

\subsection{Pseudo Code}
\label{sec:appendx_pseudo}

We provide the detailed pseudo-code in Algorithm~\ref{alg:soft} to further clarify the implementation of Self-Output Fine-Tuning (SOFT).
SOFT uses a frozen copy of the pretrained model ($f_{\text{gen}}$) to generate an intermediate state $\hat{\mathbf{X}}_t$.
By training the model $f_{\theta}$ to predict the future state $\mathbf{X}_{t+\delta t}$ from this generated input,
we improve the model's robustness to its own prediction errors without requiring expensive multi-step backpropagation.

\section{Additional Experimental Results}

\subsection{Visualization}

Figures \ref{fig:vis_app_t2m}, \ref{fig:vis_app_u10}, \ref{fig:vis_app_t_850}, and \ref{fig:vis_app_z_500} provide comparative visualizations of the ground truth, the AuroraTW baseline, and AuroraTW-SOFT predictions. We visualize the same variables analyzed in the main paper (t2m, u10, t\_850, and z\_500) using an initial start time of 2023/06/21 12:00. The predictions are shown for lead times of 6, 24, 96, and 168 hours. Consistent with our primary findings, the SOFT method reduces the error growth observed in the baseline over these extended horizons.

\clearpage

\begin{figure*}[p]
    \centering
    \vspace*{\fill}
    \includegraphics[width=0.95\linewidth]{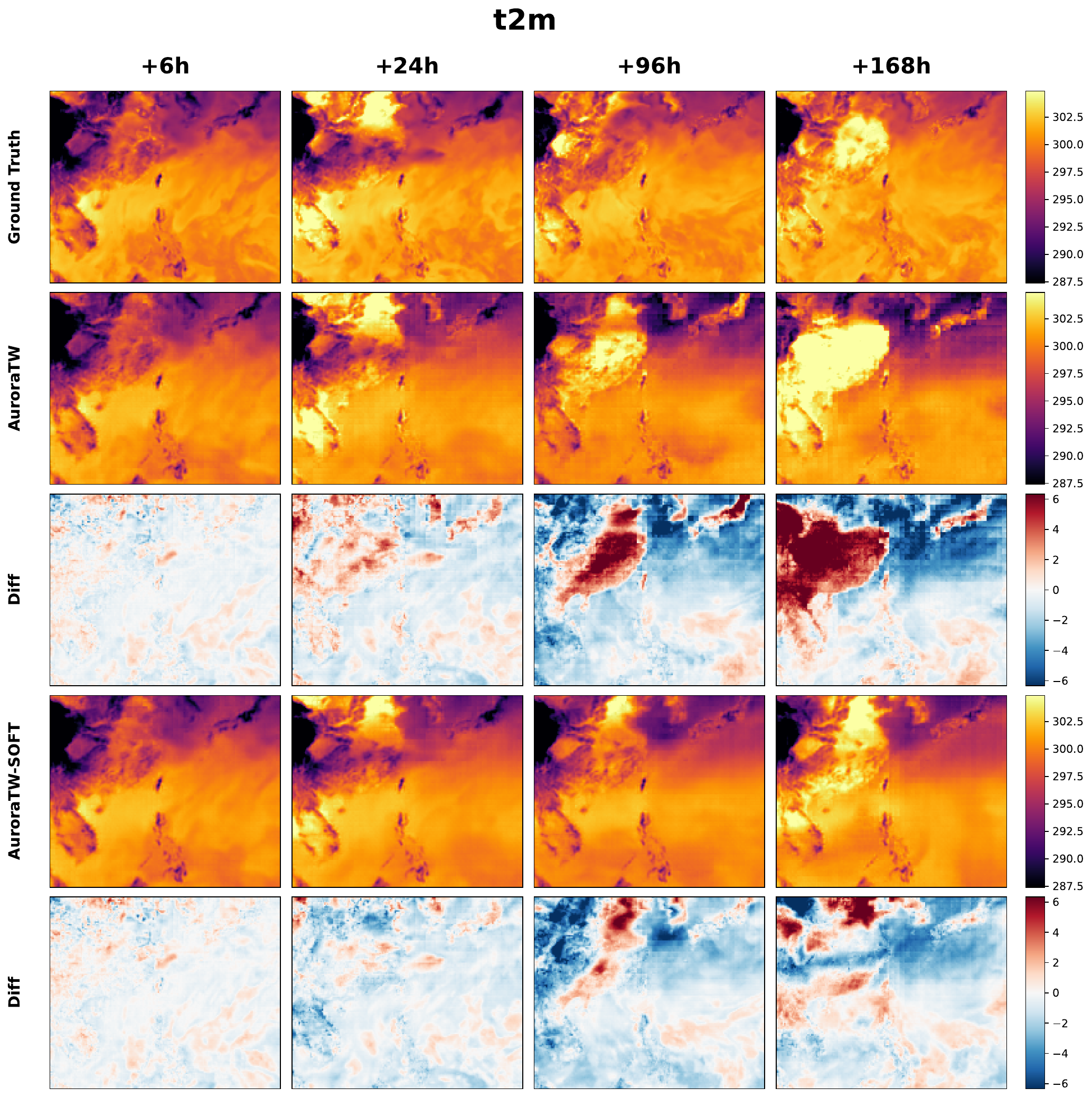}
    \caption{Visualization of t2m predictions at different lead times.}
    \label{fig:vis_app_t2m}
    \vspace*{\fill}
    \Description{t2m}
\end{figure*}

\begin{figure*}[p]
    \centering
    \vspace*{\fill}
    \includegraphics[width=0.95\linewidth]{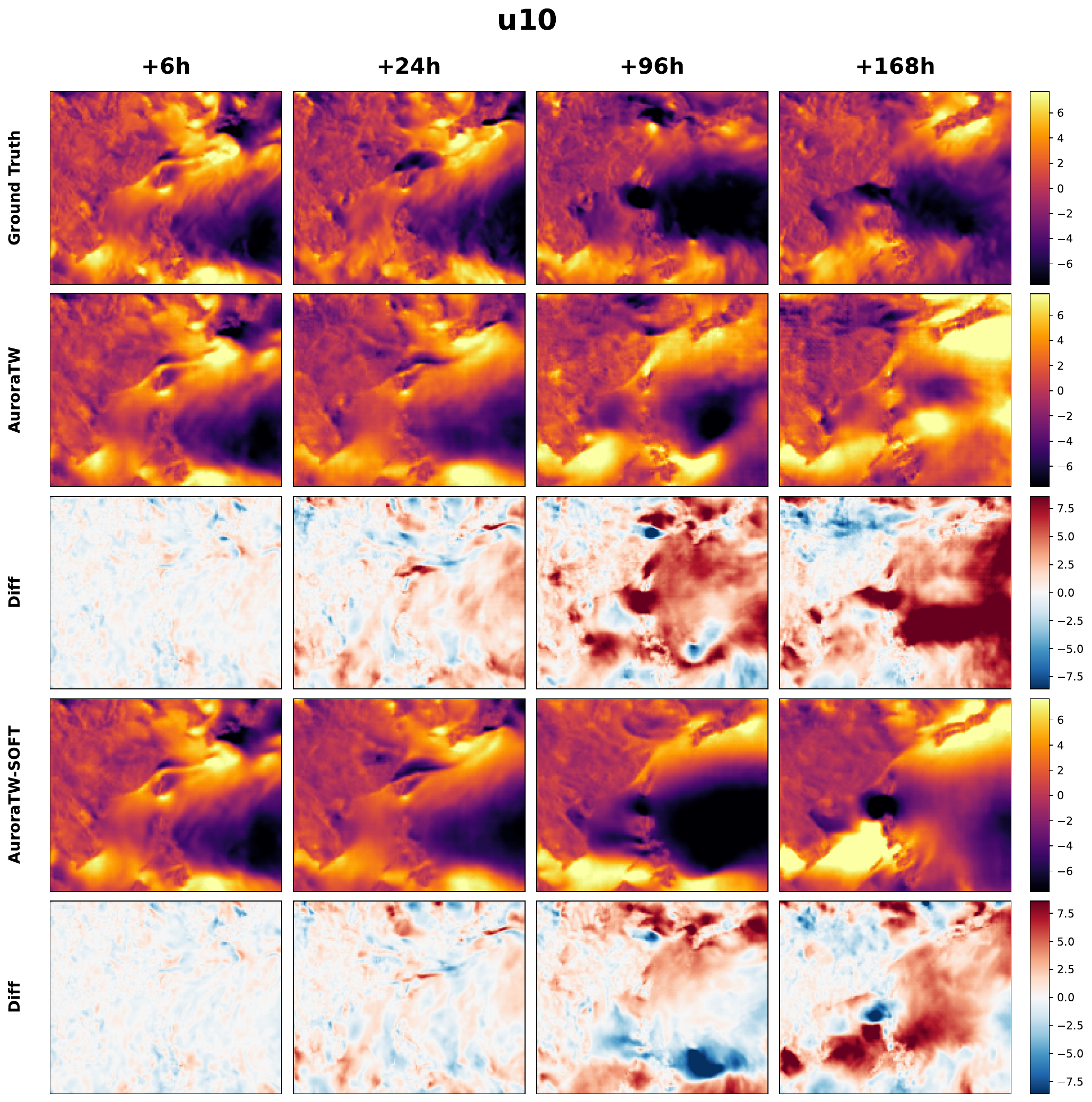}
    \caption{Visualization of u10 predictions at different lead times.}
    \label{fig:vis_app_u10}
    \vspace*{\fill}
    \Description{u10}
\end{figure*}

\begin{figure*}[p]
    \centering
    \vspace*{\fill}
    \includegraphics[width=0.95\linewidth]{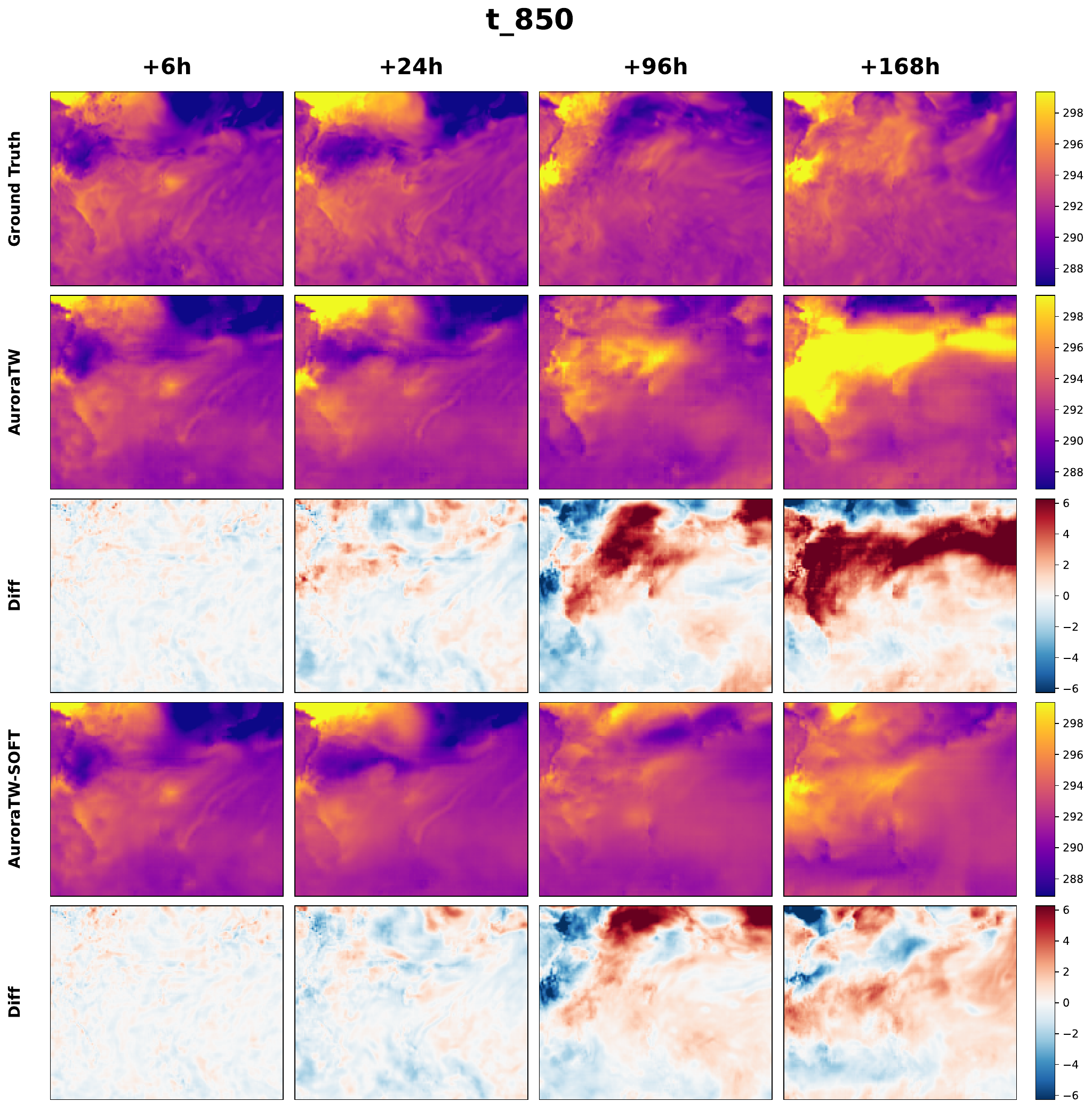}
    \caption{Visualization of t\_850 predictions at different lead times.}
    \label{fig:vis_app_t_850}
    \vspace*{\fill}
    \Description{t\_850}
\end{figure*}

\begin{figure*}[p]
    \centering
    \vspace*{\fill}
    \includegraphics[width=0.95\linewidth]{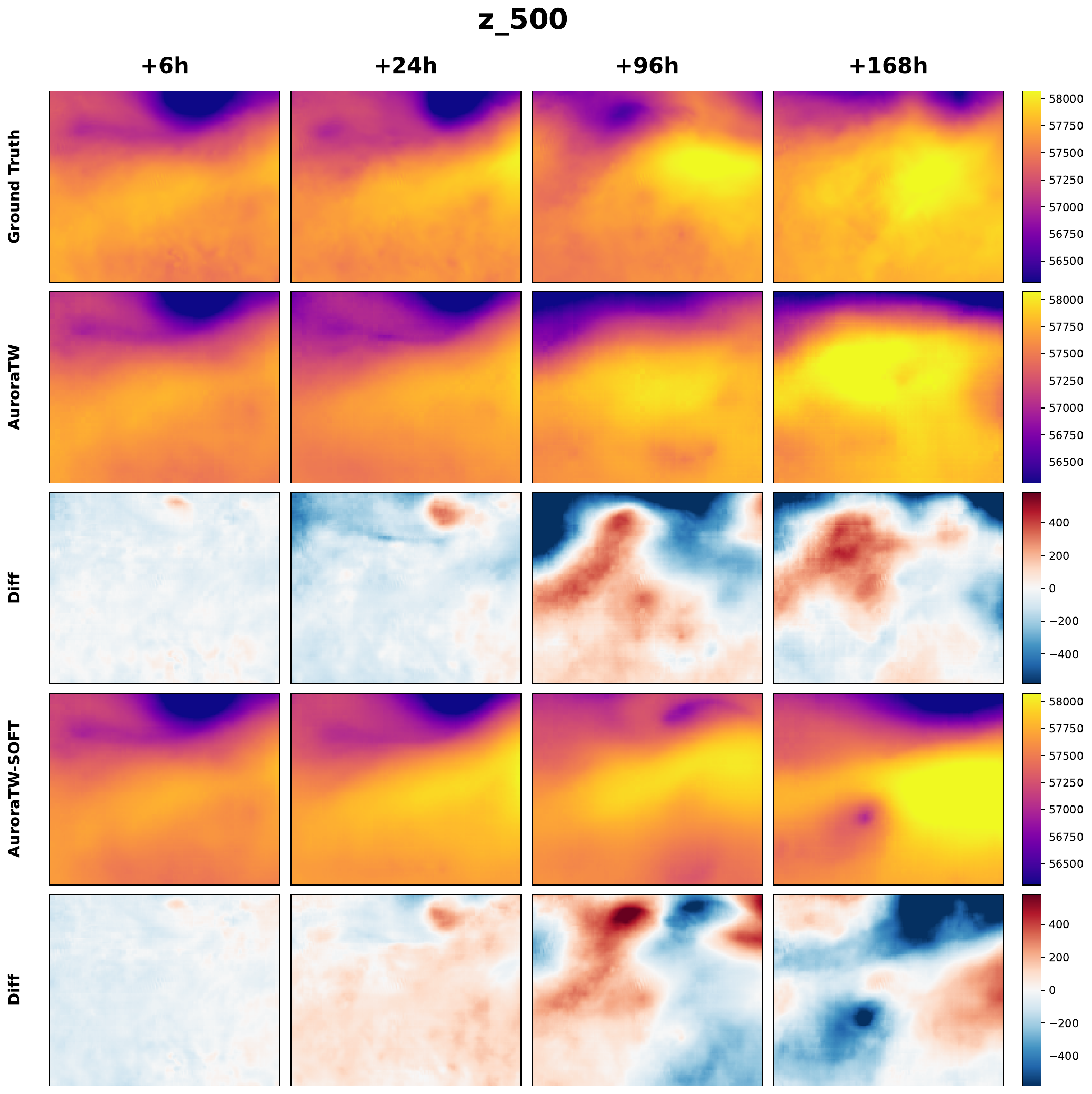}
    \caption{Visualization of z\_500 predictions at different lead times.}
    \label{fig:vis_app_z_500}
    \vspace*{\fill}
    \Description{z\_500}
\end{figure*}

\end{document}